\documentclass{article}
\usepackage[T1]{fontenc}
\usepackage{iclr2027_conference,times}

\usepackage{graphicx}
\usepackage{wrapfig}
\usepackage{pifont}

\usepackage{amsmath,amsfonts,bm}

\def\1{\bm{1}}

\DeclareMathAlphabet{\mathsfit}{\encodingdefault}{\sfdefault}{m}{sl}
\SetMathAlphabet{\mathsfit}{bold}{\encodingdefault}{\sfdefault}{bx}{n}

\usepackage{graphicx,booktabs,amssymb,xcolor}
\newtheorem{theorem}{Theorem}
\newcounter{hirpalgorithm}
\newenvironment{hirpalgorithm}[1]{%
  \par\medskip\refstepcounter{hirpalgorithm}\noindent\begin{minipage}{\linewidth}
  \hrule\smallskip\textbf{Algorithm \thehirpalgorithm: #1}\par\smallskip\hrule\smallskip\small
}{\par\smallskip\hrule\end{minipage}\par\medskip}
\usepackage{hyperref}
\usepackage{url}

\allowdisplaybreaks[4]

\title{Hierarchical Response Preservation for Continual Adaptation of Zero-Shot Graph--Text Models}

\newcommand{\hirpauthor}[3][School of Computer Science and\\Engineering, Beihang University]{%
  \begin{minipage}[t]{0.32\textwidth}
    \centering
    {\large\normalfont #2\par}
    \vspace{2pt}
    {\small\normalfont
    #1\\
    Beijing, China\\
    \urlstyle{same}\href{mailto:#3}{\nolinkurl{#3}}\par}
  \end{minipage}%
}

\newcommand{\hirpauthors}{%
  \hirpauthor{Haopeng Zhang}{24373469@buaa.edu.cn}\hfill
  \hirpauthor{Yuhan Wang}{yuhanwang@buaa.edu.cn}\hfill
  \hirpauthor[School of Electronic and\\Information Engineering\\Beihang University]{Yubing Su}{24373213@buaa.edu.cn}\par
  \vspace{10pt}
  \hirpauthor{Yingxin Chen}{chenyx1109@buaa.edu.cn}\hfill
  \hirpauthor[School of Software, Beihang University]{Xiao Wang}{xiao_wang@buaa.edu.cn}\hfill
  \hirpauthor{Ruijie Wang\thanks{Corresponding author.}}{ruijiew@buaa.edu.cn}\par
  \vspace{10pt}
  \hirpauthor{Jianxin Li}{lijx@buaa.edu.cn}%
}
\author{\hirpauthors}

\makeatletter
\renewcommand{\@maketitle}{\vbox{\hsize\textwidth
  {\LARGE\scshape\@title\par}
  \vskip 14pt
  {\centering\@author\par}
  \vskip 14pt
}}
\makeatother

\iclrfinalcopy
\hypersetup{
  hidelinks,
  pdftitle={Hierarchical Response Preservation for Continual Adaptation of Zero-Shot Graph--Text Models},
  pdfauthor={Haopeng Zhang, Yuhan Wang, Yubing Su, Yingxin Chen, Xiao Wang, Ruijie Wang, Jianxin Li},
  pdfsubject={Preprint},
  pdfkeywords={continual learning, graph-text models, zero-shot learning, HiRP}
}

\begin{document}
\maketitle
\lhead{Preprint}

\begin{abstract}
Pretrained graph--text models align graph representations with textual semantics, enabling recognition of unseen classes and transfer across graph domains.
However, as graph data and classes continually arrive, models should learn from new supervision while retaining their zero-shot transfer capabilities and historical task knowledge.
Two challenges arise: (i) new classes can overturn historical predictions despite preserved distinctions among historical classes, and (ii) overly strict response preservation can stall learning of new classes.
To address these challenges, we propose \textbf{Hi}erarchical \textbf{R}esponse \textbf{P}reservation (HiRP).
HiRP represents this competition through a hierarchical response that keeps each historical-class probability and sums new-class probabilities, preserving historical distinctions and aggregate competition while allowing distinctions within the new class group to adapt.
It further uses the geometry induced by this response to guide constrained updates, retaining useful adaptation directions while controlling response drift.
Across three class-incremental settings, HiRP achieves
absolute gains of 1.84--7.95 percentage points in average accuracy over the
strongest compared baseline in each setting, while mitigating
zero-shot transfer degradation.\footnote{We will release code and data upon acceptance.}
\end{abstract}

\section{Introduction}
\label{sec:introduction}

Pretrained graph--text models enable zero-shot node classification by matching node representations to textual class descriptions in a shared semantic space~\citep{zhu2025graphclip,liu2026adaligner}. In evolving environments, however, labeled classes may arrive sequentially from different graph domains, requiring these models to acquire new knowledge beyond their pretrained capabilities. We study \emph{cross-graph class-incremental adaptation}: each task introduces a disjoint set of classes, and inference must distinguish all classes encountered so far without access to task identities. The goal is to learn from incoming supervision while retaining both historical-class performance and zero-shot transfer to graph domains excluded from adaptation.

\begin{wrapfigure}[14]{R}{0.43\textwidth}
    \centering
    \includegraphics[width=\linewidth]{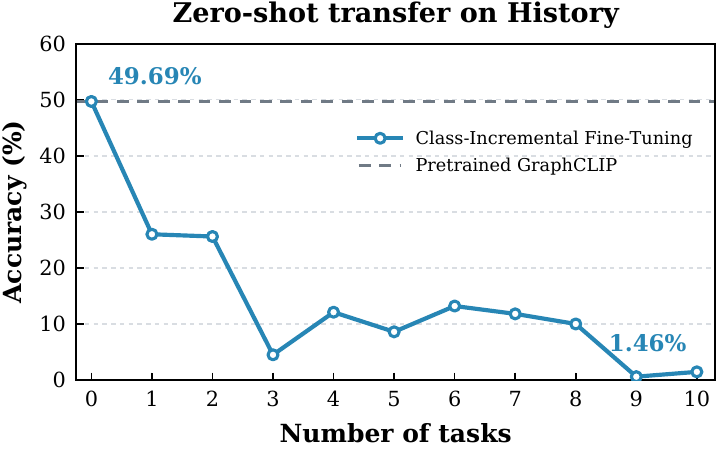}
    \caption{Zero-shot accuracy of GraphCLIP \citep{zhu2025graphclip} on History.}
    \label{fig:zero_shot_degradation}
\end{wrapfigure}

The central challenge is to reconcile new knowledge acquisition with knowledge preservation in a semantic prediction space that continually expands. Supervised updates can disrupt both the graph--text alignment established during pretraining and the distinctions learned from earlier tasks. The vulnerability of pretrained transfer is substantial: in our ten-task setting, joint fine-tuning reduces GraphCLIP's zero-shot accuracy on the held-out History domain from $49.69\%$ to $1.46\%$ (Figure~\ref{fig:zero_shot_degradation}). Meanwhile, newly introduced class descriptions change the competition faced by historical classes. Preserving previously learned distinctions is therefore not enough: those distinctions must continue to support correct predictions against an expanding set of competitors.

Existing continual learning methods mitigate forgetting through parameter regularization~\citep{kirkpatrick2017ewc}, functional preservation~\citep{li2016learning,titsias2020functional}, and gradient projection~\citep{farajtabar2020ogd,wang2026g2lora}. Related work addresses historical--new class separation~\citep{hou2019lucir}, probability aggregation under changing label semantics~\citep{cermelli2020mib}, and functional trust regions~\citep{schulman2015trpo}. However, simply restricting model changes or preserving historical outputs does not ensure prediction stability when new competitors enter the candidate space. At the other extreme, preserving the complete expanded response can restrict distinctions that should be learned among new classes. Moreover, a bound on functional drift does not by itself determine how to continue learning near its boundary. These limitations call for a selective preservation target for expanding class competition and an update rule that exploits the adaptation freedom left by that target, giving rise to two interconnected challenges.

\noindent
\begin{minipage}[t]{0.43\textwidth}
\vspace{0pt}
\textbf{Challenge 1: Preserving historical predictions under expanding class competition.}
A historical node's correct class may still outrank every other historical class yet be overtaken by a newly introduced class (Figure~\ref{fig:challenges}). Thus, stable relative preferences among historical classes need not yield stable predictions in the expanded space. The challenge is to identify a response that protects both historical distinctions and their competition with new classes, without unnecessarily constraining how the model distinguishes classes within the new group.
\end{minipage}\hfill
\begin{minipage}[t]{0.55\textwidth}
    \vspace{0pt}
    \centering
    \makeatletter\def\@captype{figure}\makeatother
    \includegraphics[width=\linewidth]{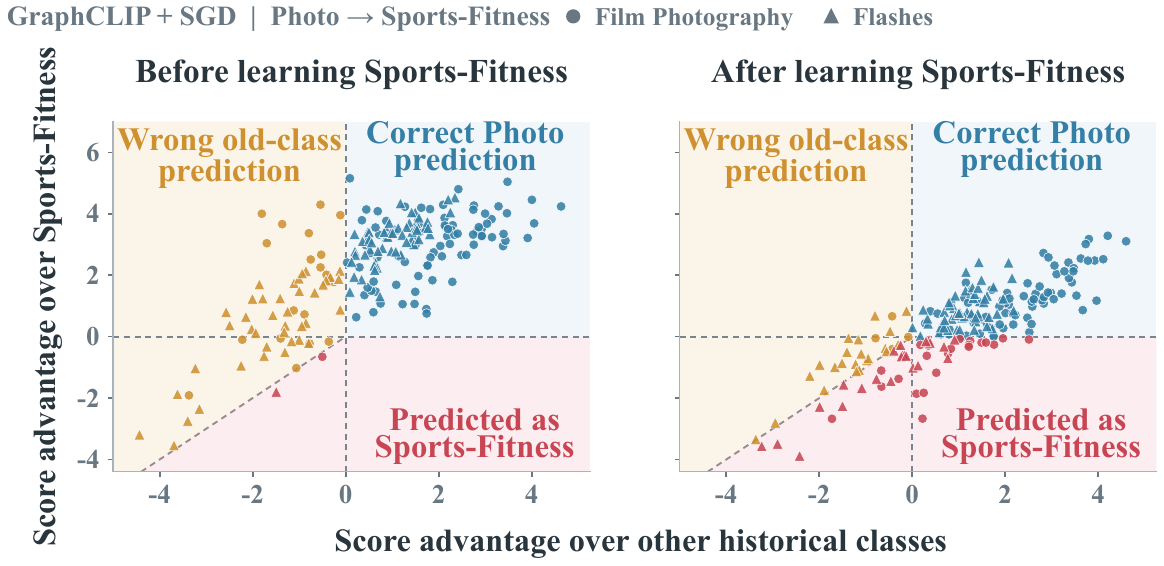}
    \caption{Evolving class competition: new classes can
    overturn historical predictions despite preserved
    discrimination among historical classes.}
    \label{fig:challenges}
\end{minipage}
\par\medskip

\textbf{Challenge 2: Learning new classes without stalling model updates.}
Once a response is protected, adaptation must operate within its allowed drift. This restriction is inherently directional: some update components strongly alter the protected probabilities, whereas others can improve new-task performance with less disruption. Near the drift limit, uniformly shrinking an update suppresses both kinds of components and can stall learning. The challenge is to retain useful adaptation directions while keeping changes to the protected response within the prescribed limits.

To address these challenges, we propose \emph{Hierarchical Response Preservation} (HiRP), a continual adaptation framework for pretrained graph--text models. The key idea is to use \emph{the same hierarchical response to define both what is preserved and how adaptation proceeds}. HiRP constructs this response on unlabeled support features and derives its preservation constraints and local update geometry from that response, connecting the relationships to be protected with the directions available for learning.

\textbf{To address Challenge 1, HiRP preserves historical class-level responses and aggregate new-class competition.}
For each support node, it retains an individual probability for every historical class and aggregates the probabilities of all new classes into a single component. Constraining this hierarchical response protects historical distinctions and limits changes in the total probability assigned to new competitors. Crucially, it leaves probability redistribution within the new-class group unconstrained by the preservation target, allowing new supervision to refine those distinctions. We further establish a sufficient condition connecting response preservation to prediction stability.

\textbf{To address Challenge 2, HiRP guides adaptation using the geometry of the protected response.}
The same response Kullback--Leibler (KL) divergence defines per-step and task-level drift constraints, as well as a local update metric that favors directions with less impact on protected relationships. Near the task-level drift limit, tangent correction suppresses the divergence-increasing component of an update while retaining motion along the constraint boundary. Nonlinear KL feasibility checks verify both drift limits before an update is accepted. This design seeks useful new-task progress within the preservation constraints rather than relying solely on uniform step shrinkage.

Our key contributions are summarized as follows.
\ding{182}~\underline{\textbf{\textit{Framework.}}}
We propose HiRP, a continual learning framework for pretrained graph--text models that jointly addresses new knowledge acquisition, zero-shot transfer retention, and historical knowledge preservation as classes arrive across graph domains.
\ding{183}~\underline{\textbf{\textit{Mechanism.}}}
We design hierarchical response preservation and response-geometry-guided adaptation, representing protected competitive relationships through historical class-level and new group-level responses.
The same response KL defines drift constraints and an update metric, while tangent correction and nonlinear feasibility checks enable constrained updates.
We further establish a sufficient condition for prediction stability.
\ding{184}~\underline{\textbf{\textit{Experiments.}}}
We evaluate HiRP across three class-incremental settings against 20 baselines. HiRP improves average accuracy by \textbf{1.84--7.95 percentage points} over the strongest compared baseline in each setting, while mitigating zero-shot transfer degradation during continual adaptation.

\section{Related Work}
\label{sec:related}

\paragraph{Continual learning and graph adaptation.}
Replay methods retain exemplars or historical inputs and logits \citep{rebuffi2017icarl,buzzega2020dark}; EWC and MAS penalize changes to important parameters \citep{kirkpatrick2017ewc,aljundi2018mas}, while OGD locally protects outputs through gradient projection \citep{farajtabar2020ogd}.
L2P learns task-agnostic prompts \citep{wang2022l2p}; graph TPP predicts tasks through profiling and prompting \citep{niu2024tpp}.
LUCIR addresses historical/current-class imbalance and separation \citep{hou2019lucir}, and G2LoRA protects category-level subspaces while coordinating graph/text adapter updates \citep{wang2026g2lora}.
HiRP constrains a shared readout's response on unlabeled features; inference directly scores all arrived classes.

\paragraph{Distillation and functional memory.}
LwF distills historical responses on current inputs \citep{li2016learning}; FRCL and FROMP preserve functions at inducing inputs or memorable examples \citep{titsias2020functional,pan2020fromp}.
MiB aggregates current-class and background probabilities to match a previous segmentation model's background label \citep{cermelli2020mib}.
DKD separates target-versus-rest mass and non-target conditional probabilities \citep{zhao2022dkd}.
HiRP uses pretrained class semantics to evaluate both references on the same expanded candidates: its aggregate selects current-class distinctions to leave unconstrained, rather than reconciling changing background labels as in MiB.
The contribution couples this response choice with local and task-level drift control and feasible adaptation on unlabeled features.
Aggregation and the KL chain rule are established tools; Appendix~\ref{app:response-comparison} compares protected quantities and support.

\paragraph{Trust-region updates.}
TRPO limits successive policy KL through a local quadratic approximation \citep{schulman2015trpo}; Natural Continual Learning shapes updates by prior precision \citep{kao2021ncl}.
TRGP selects related historical tasks for gradient projection and scaled weight reuse \citep{lin2022trgp}, while a recent preprint couples generative replay with a Fisher trust region \citep{wang2026trcl}.
HiRP uses the selected response for its metric and nonlinear KL checks against the current iterate and task-start reference, redirecting outward proposals near the task budget.

\section{Problem Setup}
\label{sec:setup}

\paragraph{Learning protocol.}
We consider a sequence of $T$ class-incremental tasks.
Task $t$ introduces new classes $\mathcal C_t$, disjoint from earlier class sets, with labeled training data $\mathcal D_t$.
Each input $x$ denotes a node together with its graph context. A task may introduce classes from a new graph domain or a new class subset of an existing domain.
The graph instances are fixed and transductive: graph structure and unlabeled attributes, including neighbors from other tasks or splits, may be used to compute features.
Only current-task training labels enter the supervised loss; validation/test nodes never enter the feature support.
We call $\mathcal O_t=\bigcup_{i<t}\mathcal C_i$ the \emph{historical classes}, $\mathcal N_t=\mathcal C_t$ the \emph{current classes}, and $\mathcal C_{\leq t}=\mathcal O_t\cup\mathcal N_t$ the cumulative candidates.
Task boundaries are known, and each task uses its complete prescribed training split.
Inference considers all classes in $\mathcal C_{\leq t}$ without a ground-truth task identity.

\paragraph{Competition in a shared semantic space.}
HiRP freezes the pretrained GraphCLIP graph and text encoders.
Let $\mathbf{h}(x)\in\mathbb R^d$ be the frozen node feature, $\widetilde{\mathbf{h}}(x)=[\mathbf{h}(x);1]$ its augmentation, and $\mathbf{u}_c\in\mathbb R^m$ the unit-normalized text embedding of class $c$.
Only the shared readout $\mathbf{W}\in\mathbb R^{m\times(d+1)}$ is updated.
With a fixed scale $\gamma>0$, the class score is
\begin{equation}
    s_c(x;\mathbf{W})=\gamma \mathbf{u}_c^\top
    \frac{\mathbf{W}\widetilde{\mathbf{h}}(x)}{\lVert \mathbf{W}\widetilde{\mathbf{h}}(x)\rVert_2}.
    \label{eq:class-score}
\end{equation}
The predictive distribution and decision are
\begin{equation}
    p_t(c\mid x;\mathbf{W})=
    \frac{\exp s_c(x;\mathbf{W})}{\sum_{j\in\mathcal C_{\leq t}}\exp s_j(x;\mathbf{W})},
    \qquad
    \hat y_t(x)=\arg\max_{c\in\mathcal C_{\leq t}}s_c(x;\mathbf{W}).
    \label{eq:global-prediction}
\end{equation}
Candidate arrival expands the label set, while the embedding dimension stays fixed.
The frozen text encoder supplies $\mathbf{u}_c$ as soon as a class is introduced, so its competition on historical features is observable before adaptation.
Training uses cross-entropy on $\mathcal D_t$ over these same candidates.

\paragraph{Bounded feature support.}
During task $t$, a support $\mathcal M_t$ contains at most $M$ frozen feature vectors drawn from an external unlabeled reference pool and previous training tasks.
Source and task metadata are retained, but historical node labels are not used in training losses.
Support features serve only to evaluate functional constraints and their local metric.
After each task, features from $\mathcal D_t$ are incorporated within the fixed support budget.
The support construction and update constraints are specified in the method section.

\section{Motivating Observations}
\label{sec:motivation}

\paragraph{Historical discrimination and cross-group competition.}
For each task transition, we evaluate the same historical test samples before and after training.
Write $A_{-,O}$ and $A_{+,O}$ for their accuracies among historical candidates $\mathcal O_t$, and $A_{-,S}$ and $A_{+,S}$ for accuracies among cumulative candidates $\mathcal C_{\leq t}$.
The accuracy change admits the identity
\begin{equation}
    A_{-,O}-A_{+,S}
    =\underbrace{A_{-,O}-A_{+,O}}_{\text{historical discrimination change}}
    +\underbrace{A_{+,O}-A_{+,S}}_{\text{post-update competition gap}}.
    \label{eq:accuracy-decomposition}
\end{equation}
\noindent
\begin{minipage}[t]{0.66\linewidth}
    \vspace{0pt}
We average over task transitions within each run before aggregating runs.
For SGD, the competition gap ranges from $17.58$ to $22.45$ percentage points across the three streams; historical discrimination changes by $-1.81$ to $1.66$ points.
Stable accuracy among historical candidates can therefore coexist with substantial errors under unified prediction.

Candidate expansion also causes errors before training.
In the ten-task setting, adding current candidates at fixed pre-update parameters reduces accuracy from $A_{-,O}=63.84\%$ to $A_{-,S}=60.82\%$ (Figure~\ref{fig:motivation-competition}).
This $3.02$-point loss is distinct from the $20.61$-point post-update competition gap.
The pre-update loss measures candidate insertion; the post-update gap includes both insertion and subsequent adaptation.
\end{minipage}\hfill
\begin{minipage}[t]{0.30\linewidth}
    \vspace{0pt}
    \centering
    \makeatletter\def\@captype{figure}\makeatother
    \includegraphics[width=\linewidth]{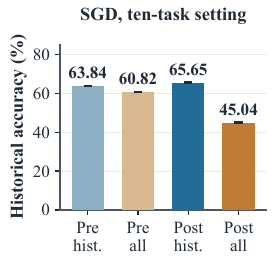}
    \caption{Historical accuracy with expanded candidates.}
    \label{fig:motivation-competition}
\end{minipage}
\par\vspace{-11pt}

\begin{wrapfigure}{R}{0.54\textwidth}
    \vspace{-16pt}
    \centering
    \includegraphics[width=\linewidth]{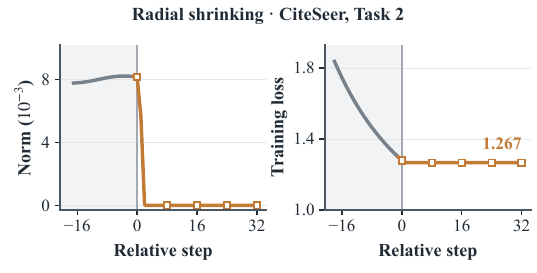}
    \caption{Uniform step shrinking near the KL limit reduces update norms while loss stalls.}
    \label{fig:motivation-radial-stalling}
    \vspace{-6pt}
\end{wrapfigure}
\paragraph{Uniform step shrinking can stall new-task learning.}
We uniformly scale down each proposed update without changing its direction (radial shrinking) on a 16-node training minibatch from CiteSeer, Task 2 of the four-task setting.
Figure~\ref{fig:motivation-radial-stalling} tracks the accepted update norm and training loss as the trajectory approaches the task-level KL boundary and the local step-radius budget decreases.
The update norm collapses toward zero, while the loss plateaus at $1.267$.
Satisfying the protection budget can therefore suppress adaptation even when the current gradient is nonzero.
Appendix~\ref{app:boundary-trajectories} gives the diagnostic setup; Section~\ref{sec:case-study} examines how tangent correction affects this behavior.

These observations motivate a response that protects historical discrimination and aggregate cross-group competition while retaining useful adaptation directions.
Protection acts on drift from the expanded-candidate reference, which can already contain candidate-insertion errors.

\section{HiRP: Hierarchical Response Preservation}
\label{sec:method}

In this section, we introduce HiRP, a continual adaptation method for pretrained graph--text models that learns new classes while preserving historical performance and zero-shot transfer.
As illustrated in Figure~\ref{fig:method-overview}, HiRP comprises three connected steps:
(1) a shared representation that freezes the graph and text encoders and adapts only the readout $\mathbf{W}$;
(2) a hierarchical response that keeps each historical-class probability and sums current-class probabilities to capture competition between historical and new classes; and
(3) a KL-constrained update that limits changes in this response and guides the readout toward useful learning directions.
We write $\boldsymbol{\theta}=\operatorname{vec}(\mathbf{W})$, with $\boldsymbol{\theta}_0$ denoting the task-entry parameters and $\boldsymbol{\theta}_k$ the current iterate.
The full procedure is given in Algorithm~\ref{alg:hirp}.

\begin{figure}[t]
\centering
\includegraphics[width=\linewidth]{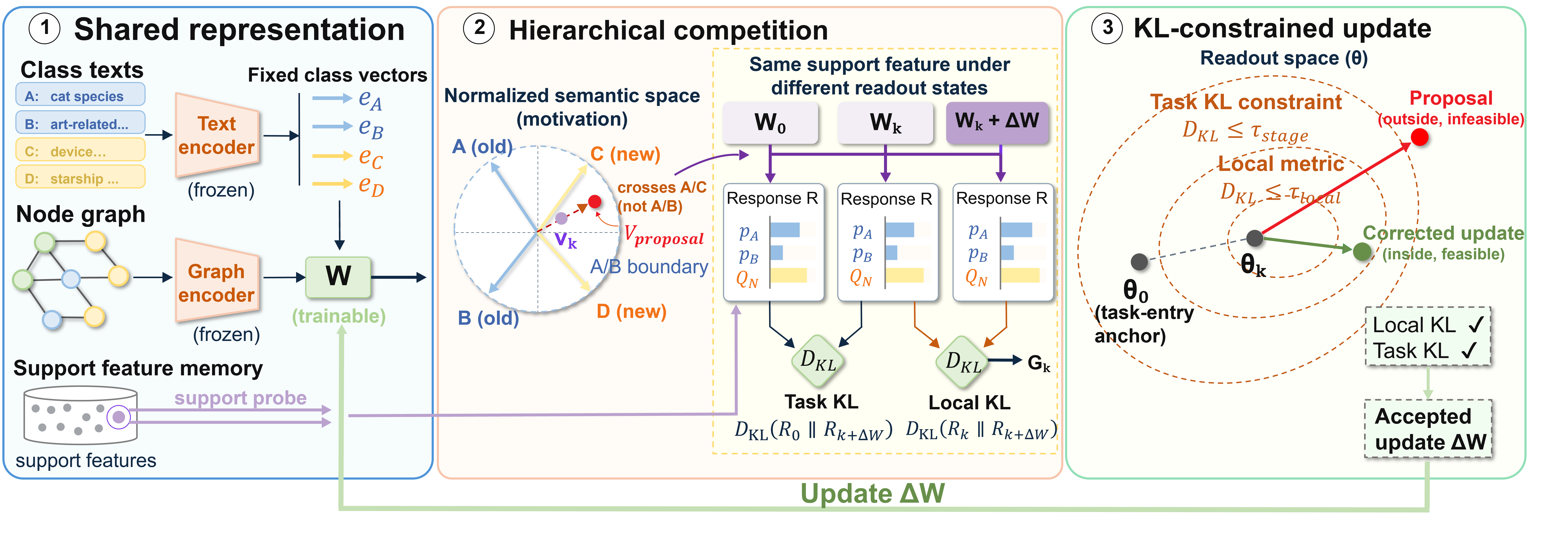}
\caption{\textbf{Overview of HiRP.}
\textbf{(1) Shared representation:} Frozen encoders provide node and class features; only the readout is updated.
\textbf{(2) Hierarchical competition:} Historical classes remain separate, while new classes share an aggregate response.
\textbf{(3) Constrained adaptation:} The response defines drift budgets and guides feasible readout updates.}
\label{fig:method-overview}
\end{figure}

\subsection{Selecting the response to preserve}
\label{sec:hierarchical-response}

\paragraph{Encoding competitive relationships.}
Predictive probabilities encode class competition: for $\boldsymbol{\nu}=\mathbf{W}\widetilde{\mathbf{h}}/\|\mathbf{W}\widetilde{\mathbf{h}}\|$, scores satisfy
\begin{equation}
    \log\frac{p_i}{p_j}=s_i-s_j=\gamma(\mathbf{u}_i-\mathbf{u}_j)^\top\boldsymbol{\nu}.
    \label{eq:geometry-log-odds}
\end{equation}
Probability ratios therefore encode pairwise score margins exactly, without requiring probability calibration (Appendix~\ref{app:competition-geometry}).

Suppressing the task index, let $p_c=p_t(c\mid x;\boldsymbol{\theta})$, $q_O=\sum_{o\in\mathcal O_t}p_o$, and $q_N=\sum_{n\in\mathcal N_t}p_n$.
HiRP keeps each historical-class probability and represents current arrivals by their total probability:
\begin{equation}
    \mathbf{r}_t(x;\boldsymbol{\theta})=\big((p_o)_{o\in\mathcal O_t},q_N\big),
    \qquad
    D_H(x;\boldsymbol{\theta}^a,\boldsymbol{\theta})
    =D_{\mathrm{KL}}\!\left(\mathbf{r}_t(x;\boldsymbol{\theta}^a)\,\|\,\mathbf{r}_t(x;\boldsymbol{\theta})\right).
    \label{eq:hierarchical-response}
\end{equation}
For $t>1$, define the historical conditional distribution $r_O(o)=p_o/q_O$.
A superscript $a$ denotes quantities evaluated at reference parameters $\boldsymbol{\theta}^a$.
The KL chain rule makes the protected relationships explicit:
\begin{equation}
    D_H
    =\underbrace{D_{\mathrm{KL}}\!\left((q_O^a,q_N^a)\,\|\,(q_O,q_N)\right)}_{\text{aggregate group competition}}
    +\underbrace{q_O^a D_{\mathrm{KL}}(\mathbf{r}_O^a\|\mathbf{r}_O)}_{\text{historical conditional structure}}.
    \label{eq:hierarchical-chain-rule}
\end{equation}
\paragraph{Which probabilities are constrained, and which can change.}
For historical A/B and current C/D, $\mathbf{r}=(p_A,p_B,p_C+p_D)$.
Constraining this response limits changes in historical probabilities and the total probability assigned to current classes, while C/D can redistribute that total.
This is the coarse competition structure HiRP retains from Figure~\ref{fig:challenges}: current classes compete as a group, without fixing each new-class probability.
For example, $(0.40,0.10,0.25,0.25)\to(0.40,0.10,0.49,0.01)$ keeps $\mathbf{r}$ unchanged but switches the winner from A to C.
Prediction stability follows when the historical winner has sufficient margin over this aggregate.

\begin{theorem}[Historical prediction stability]
\label{prop:response-stability}
For $t>1$ and a reference historical winner $c$, let
\begin{equation}
    \delta_R=p_c^a-\max\!\left\{\max_{o\in\mathcal O_t\setminus\{c\}}p_o^a,q_N^a\right\}>0,
    \qquad D_H<\delta_R^2/2
    \quad\Longrightarrow\quad \hat y_t(x;\boldsymbol{\theta})=c.
    \label{eq:main-stability-condition}
\end{equation}
The empty inner maximum is zero.
\end{theorem}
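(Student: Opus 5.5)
The plan is to turn the KL bound into a total-variation bound with Pinsker's inequality, and then show that the winning coordinate of the hierarchical response $\mathbf{r}_t$ cannot lose its lead. Since $\mathbf{r}_t(x;\boldsymbol{\theta})$ is a probability vector over the $|\mathcal O_t|+1$ components $(\mathcal O_t,\text{aggregate }N)$, Pinsker gives
\[
\tfrac12\big\|\mathbf{r}_t(x;\boldsymbol{\theta}^a)-\mathbf{r}_t(x;\boldsymbol{\theta})\big\|_1\le\sqrt{D_H/2}<\sqrt{\delta_R^2/4}=\delta_R/2 .
\]
Hence $\|\mathbf{r}^a-\mathbf{r}\|_1<\delta_R$. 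The key consequence is that for any two distinct components $u,v$ of the response, $|r_u^a-r_u|+|r_v^a-r_v|\le\|\mathbf{r}^a-\mathbf{r}\|_1<\delta_R$.

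Next I compare $c$ against each competitor in the response. For a historical $o\neq c$, the pairwise bound gives
\[
p_c-p_o\ge (p_c^a-p_o^a)-\big(|p_c^a-p_c|+|p_o^a-p_o|\big)>\delta_R-\delta_R=0,
\]
where I use $p_c^a-p_o^a\ge\delta_R$ from the definition of $\delta_R$. The identical argument with the aggregate component yields $p_c>q_N$. Because every current class $n\in\mathcal N_t$ satisfies $p_n\le q_N$, this gives $p_c>p_n$ for all new classes without constraining how $q_N$ is split among them. This is exactly where aggregation suffices: the redistribution example before the theorem is excluded because the margin is measured against the whole group total $q_N^a$. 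If $\mathcal O_t=\{c\}$, the inner maximum is zero by the stated convention, so $\delta_R=p_c^a-q_N^a$ and only the aggregate comparison is needed.

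Finally, $p_c>p_j$ for every $j\in\mathcal C_{\le t}\setminus\{c\}$. Softmax in \eqref{eq:global-prediction} is strictly monotone in the scores (equivalently, \eqref{eq:geometry-log-odds} gives $s_c-s_j=\log(p_c/p_j)>0$), so $c$ is the unique maximizer of $s_c$ and $\hat y_t(x;\boldsymbol{\theta})=c$.

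The main obstacle is obtaining the factor that makes the threshold exactly $\delta_R^2/2$. This depends on bounding the sum of the two coordinate deviations by the full $\ell_1$ distance, rather than bounding each deviation separately by the total variation; the latter would lose a factor of two. I would also check that Pinsker is applied to the coarsened response $\mathbf{r}_t$ itself rather than to the full predictive distribution, since $D_H$ is defined only on $\mathbf{r}_t$.
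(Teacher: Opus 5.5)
Your proof is correct and follows the paper's argument: Pinsker's inequality applied to the coarsened response $\mathbf{r}_t$, pairwise comparisons of $c$ against each historical coordinate and against $q_N$, and then $p_n\le q_N$ to cover the individual current classes. One correction to your closing remark: bounding each coordinate deviation separately by $\eta=\operatorname{TV}(\mathbf{r}^a,\mathbf{r})$, as the paper does, loses nothing, because it gives $p_c-p_o\ge\delta_R-2\eta$ with $2\eta=\|\mathbf{r}^a-\mathbf{r}\|_1$, so the threshold is exactly $\delta_R^2/2$ either way.
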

\noindent\textit{Proof.} See Appendix~\ref{app:response-details}.\hfill$\square$

This pointwise condition is not implied for every input by an average support budget.
Appendix~\ref{app:certificate-coverage} measures single-graph coverage.

\subsection{Bounding response drift at two scales}
\label{sec:response-budgets}

We measure the chosen response's change across $K$ nonempty support sources $\mathcal S_1,\ldots,\mathcal S_K$:
\begin{equation}
    \widehat D_t(\boldsymbol{\theta}^a,\boldsymbol{\theta})
    =\frac{1}{K}\sum_{r=1}^{K}\frac{1}{|\mathcal S_r|}
    \sum_{x\in\mathcal S_r}D_H(x;\boldsymbol{\theta}^a,\boldsymbol{\theta}).
    \label{eq:empirical-functional-divergence}
\end{equation}
Sources receive equal weight.
Small updates can accumulate drift, so we constrain change from both the current iterate and fixed task entry:
\begin{equation}
    \widehat D_t(\boldsymbol{\theta}_k,\boldsymbol{\theta}_{k+1})\leq\frac{\delta^2}{2},
    \qquad
    \widehat D_t(\boldsymbol{\theta}_0,\boldsymbol{\theta}_{k+1})\leq\frac{\rho^2}{2}.
    \label{eq:two-functional-budgets}
\end{equation}
The first budget limits one step; the second limits drift from $\boldsymbol{\theta}_0$, which resets at the next task.
Both references use the expanded candidates, including new-class competition before adaptation.
These are support-average constraints; conditional protection is weaker where $q_O^a$ is small.

After each task, sampled training features enter the $M=1024$-row support (Appendix~\ref{app:support-details}).

\subsection{Adapting within the response budgets}
\label{sec:feasible-updates}

Equal-length parameter steps can change the protected response by different amounts.
We therefore use the same response KL to define the local Fisher metric, with damping $\lambda>0$:
\begin{equation}
    \mathbf{F}_k=\left.\nabla_{\boldsymbol{\theta}}^2\widehat D_t(\boldsymbol{\theta}_k,\boldsymbol{\theta})\right|_{\boldsymbol{\theta}=\boldsymbol{\theta}_k},
    \qquad \mathbf{G}_k=\mathbf{F}_k+\lambda \mathbf{I},
    \qquad \|\mathbf{v}\|_{\mathbf{G}_k}^2=\mathbf{v}^\top \mathbf{G}_k\mathbf{v}.
    \label{eq:functional-fisher}
\end{equation}
This metric penalizes directions that rapidly change the response.
For the current cross-entropy gradient $\mathbf{g}_k$, implicit Fisher products and conjugate gradients approximate $\mathbf{G}_k\mathbf{v}_k=\mathbf{g}_k$ (Appendix~\ref{app:fisher-products}), yielding
\begin{equation}
    \mathbf{d}_k=-\delta\frac{\mathbf{v}_k}{\sqrt{\mathbf{g}_k^\top \mathbf{v}_k}}.
    \label{eq:natural-proposal}
\end{equation}
With an exact solve, this minimizes the linearized loss within the local metric ball.

Near the task-level KL boundary, an outward proposal may be almost eliminated by scalar shrinking even when tangent motion remains possible.
Let $\mathbf{a}_k=\nabla_{\boldsymbol{\theta}}\widehat D_t(\boldsymbol{\theta}_0,\boldsymbol{\theta}_k)$ and compute $\mathbf{u}_k\approx \mathbf{G}_k^{-1}\mathbf{a}_k$.
When the task divergence reaches $95\%$ of its budget and $\mathbf{a}_k^\top \mathbf{d}_k>0$, we form
\begin{equation}
    \mathbf{d}_k^{\mathrm{tan}}
    =\mathbf{d}_k-\frac{\mathbf{a}_k^\top \mathbf{d}_k}{\mathbf{a}_k^\top \mathbf{u}_k}\mathbf{u}_k.
    \label{eq:tangent-proposal}
\end{equation}
With an exact inverse, this $\mathbf{G}_k$-metric projection removes the first-order outward component and retains tangent motion.
A heuristic inward adjustment accommodates curvature; the correction is rescaled and kept only if nondegenerate and descending (Appendix~\ref{app:solver-details}).
Finite CG supplies an approximate proposal, while the original nonlinear KL determines feasibility: scalar searches shrink it as needed, and both budgets are checked before acceptance.

We use $\delta=0.002$, $\rho=0.1$, and $\lambda=0.01$.
At task one, $\mathbf{r}_1=(1)$ and $\mathbf{G}_k=\lambda \mathbf{I}$; the update is a normalized gradient step, so complete-system gains also include first-task optimization (Appendix~\ref{app:first-stage-attribution}).
Inference uses Equation~\eqref{eq:global-prediction} without the support or solver.

\section{Experiments}
\label{sec:experiments}

\subsection{Experimental Setup}

\paragraph{Datasets and baselines.}
We evaluate three class-incremental settings (Table~\ref{tab:cross-graph}).
The first uses four datasets---Cora, CiteSeer, WikiCS, and Photo---in four tasks; the second adds Sports-Fitness and Actor for six datasets and six tasks.
In both, each dataset forms one task.
The third uses the same four datasets as the first, split into two, two, three, and three tasks, respectively, for ten tasks. Fixed graphs provide transductive context; training labels arrive task by task.
We compare readout regularizers, prototype methods, LoRA-based adapters~\citep{hu2022lora}, and graph--text systems.
SimpleCIL uses BERT as its text backbone.
HiRP, SGD, and LwF share frozen GraphCLIP features and the pretrained readout; all use three epochs per task, batch size 16, and label smoothing 0.2.
HiRP uses at most 1024 support features and the default budgets in Section~\ref{sec:feasible-updates}.
Datasets and baseline settings are detailed in Appendices~\ref{app:experimental-details}--\ref{app:comparison-contract}.

\paragraph{Evaluation protocol.}
Training uses the complete prescribed split; prediction considers all arrived classes without task identities.
Let $A_{t,i}$ be accuracy on the test nodes of task $i$ after learning task $t$, and $n_i$ their number.
We report final accuracy (FA), final task-averaged accuracy (AA), and signed average forgetting (AF):
\begin{equation}
    \mathrm{FA}=\frac{\sum_{i=1}^{T}n_iA_{T,i}}{\sum_{i=1}^{T}n_i},
    \qquad \mathrm{AA}=\frac{1}{T}\sum_{i=1}^{T}A_{T,i},
    \qquad \mathrm{AF}=\frac{1}{T-1}\sum_{i=1}^{T-1}(A_{T,i}-A_{i,i}).
    \label{eq:evaluation-metrics}
\end{equation}
FA weights test nodes equally; AA weights tasks equally. Signed AF is the change since acquisition in percentage points; higher is better.
We report means and sample standard deviations from five runs for the main table and three for aggregate auxiliary experiments. Individual-node and boundary diagnostics, common-start full-sequence controls, History trajectories, and parameter sweeps use one run, with fixed splits and class order.
Bold marks the best mean; underlining marks the second best in Table~\ref{tab:cross-graph}.

\begin{table}[t]
\centering
\caption{Class-incremental classification over three dataset/task schedules.
Means are shown with sample standard deviations in subscript.
FA and AA are accuracy (\%); signed AF is forgetting (percentage points), with higher values better.
Bold and underline mark the best and second-best means, respectively. GraphCLIP denotes sequential joint fine-tuning;
backbones and $^{\dagger}$ adaptations are specified in Appendix~\ref{app:baseline-settings}.}
\label{tab:cross-graph}
\begingroup
\def\sminus{\mathord{\scriptstyle -}}
\fontsize{8}{9.5}\selectfont
\setlength{\tabcolsep}{2pt}
\renewcommand{\arraystretch}{1.2}
\resizebox{\linewidth}{!}{%
\begin{tabular}{@{}l@{\hspace{6pt}}ccc@{\hspace{8pt}}ccc@{\hspace{8pt}}ccc@{}}
\toprule
Method
& \multicolumn{3}{c}{4 datasets / 4 tasks}
& \multicolumn{3}{c}{6 datasets / 6 tasks}
& \multicolumn{3}{c}{4 datasets / 10 tasks} \\
\cmidrule(lr){2-4}\cmidrule(lr){5-7}\cmidrule(lr){8-10}
& FA $\uparrow$ & AA $\uparrow$ & AF $\uparrow$
& FA $\uparrow$ & AA $\uparrow$ & AF $\uparrow$
& FA $\uparrow$ & AA $\uparrow$ & AF $\uparrow$ \\
\midrule
GCN
& $50.30_{\pm0.06}$ & $21.71_{\pm0.03}$ & $\sminus81.26_{\pm0.45}$
& $1.00_{\pm0.82}$ & $11.99_{\pm0.24}$ & $\sminus84.37_{\pm0.30}$
& $14.08_{\pm0.02}$ & $9.69_{\pm0.01}$ & $\sminus90.58_{\pm0.32}$ \\
SGD
& $75.79_{\pm0.12}$ & $58.36_{\pm0.20}$ & $\sminus25.73_{\pm0.17}$
& $83.13_{\pm0.19}$ & $58.89_{\pm0.27}$ & $\sminus22.64_{\pm0.20}$
& $50.24_{\pm0.16}$ & $50.05_{\pm0.17}$ & $\sminus41.57_{\pm0.20}$ \\
LwF
& $\underline{76.60}_{\pm0.14}$ & $58.46_{\pm0.56}$ & $\sminus25.21_{\pm0.73}$
& $\underline{84.23}_{\pm0.14}$ & $61.63_{\pm0.22}$ & $\sminus19.03_{\pm0.22}$
& $56.70_{\pm0.63}$ & $49.87_{\pm0.39}$ & $\sminus40.99_{\pm0.43}$ \\
EWC
& $75.84_{\pm0.00}$ & $58.13_{\pm0.00}$ & $\sminus25.95_{\pm0.00}$
& $83.29_{\pm0.00}$ & $59.08_{\pm0.00}$ & $\sminus22.44_{\pm0.00}$
& $50.30_{\pm0.00}$ & $50.12_{\pm0.00}$ & $\sminus41.36_{\pm0.00}$ \\
MAS
& $66.09_{\pm0.84}$ & $61.11_{\pm0.56}$ & $\sminus12.26_{\pm1.77}$
& $66.85_{\pm0.00}$ & $62.79_{\pm0.00}$ & $\sminus7.79_{\pm0.00}$
& $57.74_{\pm1.10}$ & $54.27_{\pm1.12}$ & $\sminus31.22_{\pm2.40}$ \\
OGD
& $76.44_{\pm0.00}$ & $58.38_{\pm0.00}$ & $\sminus25.64_{\pm0.00}$
& $83.48_{\pm0.00}$ & $59.33_{\pm0.00}$ & $\sminus22.18_{\pm0.00}$
& $47.50_{\pm0.00}$ & $48.14_{\pm0.00}$ & $\sminus43.31_{\pm0.00}$ \\
Cosine
& $62.95_{\pm0.83}$ & $60.02_{\pm0.70}$ & $\sminus12.20_{\pm1.16}$
& $58.21_{\pm3.35}$ & $58.09_{\pm0.38}$ & $\sminus7.84_{\pm1.12}$
& $\underline{61.87}_{\pm1.49}$ & $\underline{61.65}_{\pm1.08}$ & $\sminus12.48_{\pm0.78}$ \\
TEEN
& $44.30_{\pm2.94}$ & $40.53_{\pm2.01}$ & $\sminus21.40_{\pm0.66}$
& $45.51_{\pm6.53}$ & $41.54_{\pm2.88}$ & $\sminus13.25_{\pm0.11}$
& $50.29_{\pm4.42}$ & $47.45_{\pm2.20}$ & $\sminus18.46_{\pm1.66}$ \\
TPP-style
& $72.18_{\pm0.11}$ & $\underline{69.35}_{\pm0.23}$ & $\underline{\sminus2.29}_{\pm0.03}$
& $76.04_{\pm0.04}$ & $\underline{70.94}_{\pm0.06}$ & $\underline{\sminus1.40}_{\pm0.02}$
& $54.86_{\pm0.50}$ & $61.47_{\pm0.72}$ & $\underline{\sminus9.83}_{\pm0.38}$ \\
\midrule
BERT
& $44.27_{\pm0.12}$ & $19.11_{\pm0.05}$ & $\sminus75.19_{\pm0.51}$
& $34.02_{\pm8.56}$ & $19.03_{\pm1.52}$ & $\sminus70.49_{\pm2.15}$
& $12.97_{\pm0.08}$ & $8.93_{\pm0.05}$ & $\sminus83.86_{\pm0.20}$ \\
RoBERTa
& $46.14_{\pm0.34}$ & $19.96_{\pm0.23}$ & $\sminus77.38_{\pm0.52}$
& $75.12_{\pm1.17}$ & $27.64_{\pm0.51}$ & $\sminus62.55_{\pm0.62}$
& $13.21_{\pm0.04}$ & $9.09_{\pm0.03}$ & $\sminus83.57_{\pm1.40}$ \\
SimpleCIL
& $42.47_{\pm0.93}$ & $52.15_{\pm0.84}$ & $\mathbf{\sminus1.79}_{\pm0.12}$
& $37.66_{\pm1.51}$ & $47.21_{\pm1.60}$ & $\mathbf{\sminus1.20}_{\pm0.18}$
& $42.09_{\pm1.48}$ & $49.61_{\pm1.36}$ & $\mathbf{\sminus8.19}_{\pm0.58}$ \\
\midrule
GCN\textsubscript{LLMEmb}
& $50.39_{\pm0.40}$ & $21.75_{\pm0.17}$ & $\sminus81.82_{\pm0.24}$
& $0.66_{\pm0.21}$ & $12.21_{\pm0.20}$ & $\sminus84.62_{\pm0.36}$
& $14.08_{\pm0.02}$ & $9.69_{\pm0.01}$ & $\sminus91.17_{\pm0.17}$ \\
GraphCLIP
& $44.27_{\pm1.77}$ & $19.12_{\pm0.78}$ & $\sminus74.64_{\pm0.95}$
& $19.95_{\pm9.81}$ & $4.68_{\pm2.15}$ & $\sminus70.40_{\pm3.20}$
& $15.37_{\pm1.99}$ & $9.88_{\pm1.06}$ & $\sminus78.03_{\pm0.70}$ \\
GraphGPT$^{\dagger}$
& $72.87_{\pm0.94}$ & $52.05_{\pm0.86}$ & $\sminus37.19_{\pm1.24}$
& $83.51_{\pm0.83}$ & $47.86_{\pm3.59}$ & $\sminus40.25_{\pm4.42}$
& $26.72_{\pm2.19}$ & $24.84_{\pm1.74}$ & $\sminus71.19_{\pm1.69}$ \\
LLaGA$^{\dagger}$
& $49.42_{\pm0.14}$ & $21.34_{\pm0.08}$ & $\sminus77.58_{\pm1.02}$
& $21.35_{\pm13.00}$ & $15.34_{\pm2.62}$ & $\sminus76.88_{\pm3.19}$
& $13.99_{\pm0.05}$ & $9.63_{\pm0.04}$ & $\sminus87.49_{\pm0.87}$ \\
\midrule
LoRA
& $76.02_{\pm0.15}$ & $60.87_{\pm0.13}$ & $\sminus20.26_{\pm0.22}$
& $83.63_{\pm0.07}$ & $64.07_{\pm0.16}$ & $\sminus14.61_{\pm0.14}$
& $51.18_{\pm0.12}$ & $51.11_{\pm0.15}$ & $\sminus38.60_{\pm0.19}$ \\
InfLoRA$^{\dagger}$
& $54.54_{\pm0.43}$ & $30.35_{\pm0.94}$ & $\sminus41.32_{\pm2.40}$
& $44.40_{\pm3.99}$ & $26.12_{\pm1.07}$ & $\sminus27.31_{\pm1.04}$
& $47.48_{\pm1.41}$ & $37.35_{\pm1.95}$ & $\sminus26.86_{\pm6.04}$ \\
SD-LoRA
& $62.15_{\pm1.37}$ & $41.41_{\pm1.60}$ & $\sminus27.13_{\pm1.67}$
& $65.82_{\pm2.04}$ & $38.03_{\pm1.10}$ & $\sminus18.12_{\pm1.00}$
& $47.41_{\pm0.53}$ & $37.51_{\pm1.24}$ & $\sminus30.09_{\pm5.52}$ \\
G2LoRA$^{\dagger}$
& $72.70_{\pm0.47}$ & $57.90_{\pm0.43}$ & $\sminus20.53_{\pm0.43}$
& $81.07_{\pm0.05}$ & $59.60_{\pm0.13}$ & $\sminus16.07_{\pm0.19}$
& $49.05_{\pm0.07}$ & $51.46_{\pm0.13}$ & $\sminus32.43_{\pm0.16}$ \\
\midrule
HiRP (ours)
& $\mathbf{80.05}_{\pm0.15}$ & $\mathbf{71.19}_{\pm0.22}$ & $\sminus8.51_{\pm0.36}$
& $\mathbf{87.27}_{\pm0.44}$ & $\mathbf{73.45}_{\pm0.40}$ & $\sminus6.11_{\pm0.44}$
& $\mathbf{75.30}_{\pm0.31}$ & $\mathbf{69.60}_{\pm0.49}$ & $\sminus14.88_{\pm0.30}$ \\
\bottomrule
\end{tabular}%
}
\endgroup
\end{table}

\subsection{Main Results on Class-Incremental Learning}

Table~\ref{tab:cross-graph} shows that HiRP leads in FA and AA across all settings. Its gains over the strongest baseline are 3.45, 3.04, and 13.43 points in FA, and 1.84, 2.51, and 7.95 points in AA, respectively; the ten-task setting shows the largest gains after repeated class expansion.

HiRP also has better AF than SGD and LwF. SimpleCIL forgets less but has substantially lower FA and AA, so lower forgetting alone does not imply better continual performance. Appendix~\ref{app:baseline-retention} separates acquisition and retention; Section~\ref{sec:case-study} compares common-start updates, and Appendix~\ref{app:memory-distillation} compares distillation under the same memory budget.

\subsection{Classification Throughout the Stream}
\label{sec:stagewise-performance}

\begin{figure}[t]
    \centering
    \includegraphics[width=\linewidth]{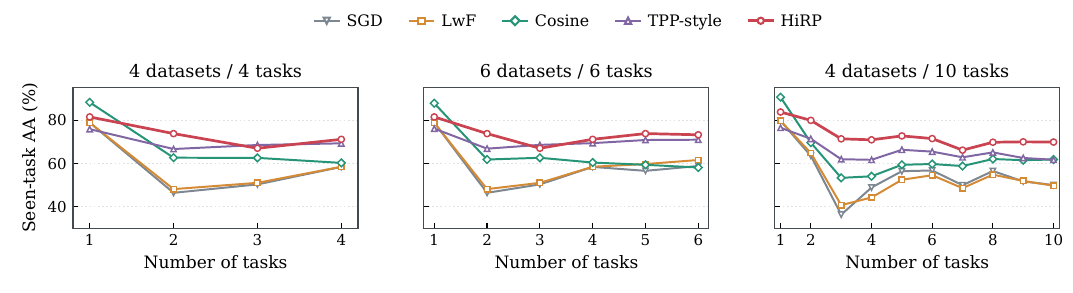}
    \caption{Accuracy throughout learning, averaged over arrived tasks. Each point uses all classes introduced up to that task.}
    \label{fig:stagewise-accuracy}
\end{figure}

HiRP stays above SGD and LwF throughout all three streams (Figure~\ref{fig:stagewise-accuracy}). At task 2 of the four- and six-task settings, AA is $73.77\%$ versus LwF's $48.07\%$; the ten-task advantage persists through repeated arrivals. These curves track acquisition and retention jointly.
Appendices~\ref{app:single-graph-results}, \ref{app:comparison-contract}, and~\ref{app:hyperparameter-sensitivity} report single-graph results, training costs, and parameter sensitivity.

\subsection{Zero-Shot Transfer Retention}
\label{sec:zero-shot-retention}
\begin{wrapfigure}{R}{0.43\textwidth}
    \centering
    \vspace{-8pt}
    \includegraphics[width=\linewidth]
        {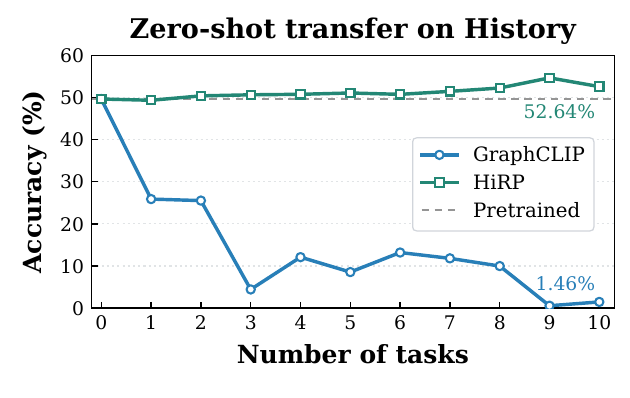}
    \caption{Zero-shot accuracy of GraphCLIP and HiRP on History during class-incremental fine-tuning.}
    \label{fig:zero-shot-retention}
    \vspace{-10pt}
\end{wrapfigure}

We revisit the History transfer degradation in Figure~\ref{fig:zero_shot_degradation}, comparing HiRP and GraphCLIP~\citep{zhu2025graphclip} fine-tuning over ten tasks from the same pretrained model. History is excluded from training and feature support; evaluation uses its complete class set without target-domain adaptation (Appendix~\ref{app:baseline-settings}). HiRP finishes at $52.64\%$, above the initial $49.69\%$, while GraphCLIP falls to $1.46\%$ (Figure~\ref{fig:zero-shot-retention}). HiRP thus retains zero-shot transfer while learning new classes; in a common-start readout comparison, it also exceeds SGD's mean across three held-out domains (Table~\ref{tab:common-start-transfer}).

\subsection{Ablation Studies}

\begin{table}[t]
\centering
\caption{Response and update-direction ablation under the same
data, support rules, and numerical KL budgets.
The mass response keeps only historical/current group probabilities.
HiRP combines the hierarchical response with tangent correction.}
\label{tab:ablation}
\begingroup
\def\sminus{\mathord{\scriptstyle -}}
\fontsize{8}{9.5}\selectfont
\setlength{\tabcolsep}{2pt}
\renewcommand{\arraystretch}{1.2}
\resizebox{\linewidth}{!}{%
\begin{tabular}{
    @{}l
    @{\hspace{6pt}}ccc
    @{\hspace{8pt}}ccc
    @{\hspace{8pt}}ccc
    @{}
}
\toprule
Method
& \multicolumn{3}{c}{4 datasets / 4 tasks}
& \multicolumn{3}{c}{6 datasets / 6 tasks}
& \multicolumn{3}{c}{4 datasets / 10 tasks} \\
\cmidrule(lr){2-4}
\cmidrule(lr){5-7}
\cmidrule(lr){8-10}
& FA $\uparrow$ & AA $\uparrow$ & AF $\uparrow$
& FA $\uparrow$ & AA $\uparrow$ & AF $\uparrow$
& FA $\uparrow$ & AA $\uparrow$ & AF $\uparrow$ \\
\midrule
Mass + radial
& $77.10_{\pm0.40}$ & $67.23_{\pm0.45}$ & $\sminus13.91_{\pm0.83}$
& $84.12_{\pm1.09}$ & $69.60_{\pm0.41}$ & $\sminus10.38_{\pm0.26}$
& $64.49_{\pm1.37}$ & $63.07_{\pm1.20}$ & $\sminus25.38_{\pm1.13}$ \\
Mass + tangent
& $78.50_{\pm0.11}$ & $68.07_{\pm1.20}$ & $\sminus14.07_{\pm2.12}$
& $85.82_{\pm0.54}$ & $68.18_{\pm0.63}$ & $\sminus13.24_{\pm0.63}$
& $62.31_{\pm1.58}$ & $58.06_{\pm1.84}$ & $\sminus31.27_{\pm2.04}$ \\
Hier. + radial
& $75.52_{\pm0.88}$ & $69.03_{\pm0.14}$ & $\mathbf{\sminus7.79}_{\pm0.41}$
& $82.83_{\pm0.47}$ & $71.59_{\pm0.14}$ & $\mathbf{\sminus5.17}_{\pm0.17}$
& $74.36_{\pm0.11}$ & $69.71_{\pm0.15}$ & $\mathbf{\sminus12.76}_{\pm0.27}$ \\
\midrule
HiRP (ours)
& $\mathbf{80.09}_{\pm0.19}$ & $\mathbf{71.12}_{\pm0.26}$ & $\sminus8.43_{\pm0.32}$
& $\mathbf{87.04}_{\pm0.42}$ & $\mathbf{73.22}_{\pm0.23}$ & $\sminus6.21_{\pm0.32}$
& $\mathbf{75.43}_{\pm0.05}$ & $\mathbf{69.86}_{\pm0.13}$ & $\sminus14.74_{\pm0.15}$ \\
\bottomrule
\end{tabular}%
}
\endgroup
\end{table}

We ablate the response and update direction. \textbf{Mass} preserves aggregate historical/current probabilities, while \textbf{Hier.} also preserves within-history distinctions. \textbf{Radial} shrinks the update; \textbf{tangent} corrects its direction. Table~\ref{tab:ablation} compares Mass + radial, Mass + tangent, and Hier. + radial with full HiRP (Hier. + tangent).

Replacing the hierarchical response with Mass while retaining tangent correction reduces FA, AA, and AF across all three settings (Table~\ref{tab:ablation}). Removing tangent correction lowers FA and AA but improves AF, trading adaptation for retention. Tangent correction alone does not consistently improve Mass, so response structure and update direction must be designed jointly.

\subsection{Case Study}
\label{sec:case-study}

\paragraph{Preserving historical decisions.}
From a shared task-5 checkpoint in the ten-task setting, HiRP, Mass, and SGD take 32 identical batches; SGD* matches HiRP's step norms. For SGD*, Mass, and HiRP, old-only accuracy changes by $-2.60$, $-0.62$, and $+0.11$ points, while old-to-new crossing rates are $2.63\%$, $0.54\%$, and $0.25\%$ (Figure~\ref{fig:case-preservation}a,b). New-class accuracy differs by less than $0.16$ points: HiRP reduces crossings while preserving historical distinctions.

\begin{figure}[!htbp]
    \centering
    \includegraphics[width=\linewidth]{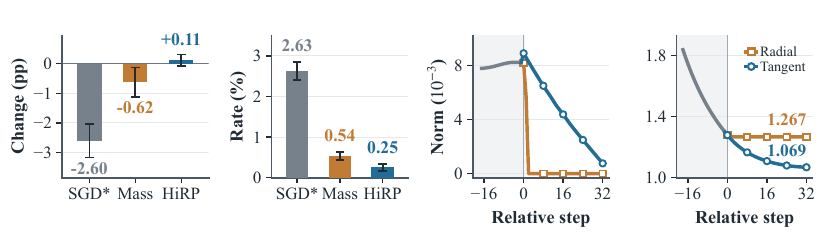}\par
    \vspace{-1mm}
    \noindent\makebox[\linewidth][l]{%
        \hspace*{.164\linewidth}\makebox[0pt][c]{\footnotesize\bfseries (a) Historical accuracy}%
        \hspace*{.248\linewidth}\makebox[0pt][c]{\footnotesize\bfseries (b) Old-to-new crossings}%
        \hspace*{.244\linewidth}\makebox[0pt][c]{\footnotesize\bfseries (c) Update norm}%
        \hspace*{.247\linewidth}\makebox[0pt][c]{\footnotesize\bfseries (d) Training loss}%
    }\par
    \caption{Preserving historical decisions and maintaining adaptation.
    (a,b) Ten-task setting, Task 5: changes after 32 common-start updates; SGD* matches HiRP's step norms. Error bars show sample standard deviations.
    (c,d) CiteSeer, Task 2: accepted update norms and training loss under the same decreasing step-radius budget. Gray denotes the shared initial trajectory.}
    \label{fig:case-preservation}
    \label{fig:case-boundary-learning}
\end{figure}

\paragraph{Learning near the protection boundary.}
We revisit the fixed-minibatch trajectory in Figure~\ref{fig:motivation-radial-stalling} from the same state and budget schedule. As the step-radius budget decreases, tangent correction lowers training loss to $1.069$ at relative step 32, while radial shrinking stalls at $1.267$ (Figure~\ref{fig:case-boundary-learning}c,d). Tangent updates become smaller yet continue learning near the protection boundary. In separate complete task-2 runs, new-class accuracy rises from $64.47\%$ to $66.67\%$, with a $0.31$-point old-class decrease. Appendices~\ref{app:boundary-trajectories} and~\ref{app:case-study} give diagnostic settings and single-step results.

\section{Conclusion}
\label{sec:conclusion}

We studied how pretrained graph--text models can learn classes arriving across graph domains while retaining historical performance and zero-shot transfer. Our findings show that preserving distinctions among historical classes alone is insufficient as new candidates enter the prediction space: competition between historical and new classes also matters. HiRP captures these relationships in a hierarchical response and uses it to constrain and guide readout updates. Across three class-incremental streams, HiRP improves unified classification and mitigates zero-shot degradation on held-out domains. These results suggest that continual adaptation benefits from identifying which predictive relationships to preserve and updating the model within the resulting constraints.

\subsection*{AI use statement}

In this work, we used generative AI tools for writing assistance and research execution.
For writing assistance, these tools supported editing and polishing manuscript text and \LaTeX{} files, as well as translating text.
For research execution, they assisted with interpreting experimental results, revising scientific figures and plotting code, and debugging experimental code.
The authors manually reviewed all AI-assisted content for correctness and consistency with the research and take full responsibility for the final content of this paper.

\subsection*{Reproducibility statement}

The learning and inference protocol is defined in Section~\ref{sec:setup}, and the experimental setup is given in Section~\ref{sec:experiments}. Appendix~\ref{app:method-details} provides derivations and implementation details for HiRP. Appendix~\ref{app:experimental-details} documents the data and task protocols, baseline implementations, comparison conditions, and additional results, including hyperparameter sensitivity.

\clearpage
\bibliography{ref}

@inproceedings{aljundi2018mas,
  title = {Memory Aware Synapses: Learning what (not) to forget},
  author = {Aljundi, Rahaf and Babiloni, Francesca and Elhoseiny, Mohamed and Rohrbach, Marcus and Tuytelaars, Tinne},
  booktitle = {Computer Vision -- {ECCV} 2018},
  series = {Lecture Notes in Computer Science},
  volume = {11207},
  pages = {144--161},
  year = {2018},
  publisher = {Springer International Publishing},
  doi = {10.1007/978-3-030-01219-9_9},
  url = {https://doi.org/10.1007/978-3-030-01219-9_9}
}

@inproceedings{farajtabar2020ogd,
  title = {Orthogonal Gradient Descent for Continual Learning},
  author = {Farajtabar, Mehrdad and Azizan, Navid and Mott, Alex and Li, Ang},
  booktitle = {Proceedings of the Twenty Third International Conference on Artificial Intelligence and Statistics},
  series = {Proceedings of Machine Learning Research},
  volume = {108},
  pages = {3762--3773},
  year = {2020},
  publisher = {PMLR},
  url = {https://proceedings.mlr.press/v108/farajtabar20a.html}
}

@article{cheng2025llm4gcl,
  title = {Can {LLMs} Alleviate Catastrophic Forgetting in Graph Continual Learning? {A} Systematic Study},
  author = {Cheng, Ziyang and Li, Zhixun and Li, Yuhan and Song, Yixin and Zhao, Kangyi and Cheng, Dawei and Li, Jia and Cheng, Hong and Yu, Jeffrey Xu},
  journal = {arXiv preprint arXiv:2505.18697v2},
  year = {2025},
  url = {https://arxiv.org/abs/2505.18697v2}
}

@inproceedings{liu2026adaligner,
  title = {Learning Noise-Resilient and Transferable Graph-Text Alignment via Dynamic Quality Assessment},
  author = {Liu, Yuhang and Shao, Minglai and Wo, Zengyi and Chu, Yunlong and Hao, Bing and Liu, Shengzhong and Wang, Ruijie and Li, Jianxin},
  booktitle = {Proceedings of the 49th International ACM SIGIR Conference on Research and Development in Information Retrieval},
  pages = {1209--1220},
  year = {2026},
  publisher = {ACM},
  doi = {10.1145/3805712.3809579},
  url = {https://doi.org/10.1145/3805712.3809579}
}

@inproceedings{zhu2025graphclip,
  title = {{GraphCLIP}: Enhancing Transferability in Graph Foundation Models for Text-Attributed Graphs},
  author = {Zhu, Yun and Shi, Haizhou and Wang, Xiaotang and Liu, Yongchao and Wang, Yaoke and Peng, Boci and Hong, Chuntao and Tang, Siliang},
  booktitle = {Proceedings of the ACM on Web Conference 2025},
  pages = {2183--2197},
  year = {2025},
  publisher = {ACM},
  doi = {10.1145/3696410.3714801},
  url = {https://doi.org/10.1145/3696410.3714801}
}

@inproceedings{rebuffi2017icarl,
  title = {{iCaRL}: Incremental Classifier and Representation Learning},
  author = {Rebuffi, Sylvestre-Alvise and Kolesnikov, Alexander and Sperl, Georg and Lampert, Christoph H.},
  booktitle = {Proceedings of the IEEE Conference on Computer Vision and Pattern Recognition},
  year = {2017},
  pages = {5533--5542},
  doi = {10.1109/CVPR.2017.587},
  url = {https://doi.org/10.1109/CVPR.2017.587}
}

@inproceedings{hou2019lucir,
  title = {Learning a Unified Classifier Incrementally via Rebalancing},
  author = {Hou, Saihui and Pan, Xinyu and Loy, Chen Change and Wang, Zilei and Lin, Dahua},
  booktitle = {Proceedings of the IEEE/CVF Conference on Computer Vision and Pattern Recognition},
  year = {2019},
  pages = {831--839},
  doi = {10.1109/CVPR.2019.00092},
  url = {https://openaccess.thecvf.com/content_CVPR_2019/html/Hou_Learning_a_Unified_Classifier_Incrementally_via_Rebalancing_CVPR_2019_paper.html}
}

@inproceedings{titsias2020functional,
  title = {Functional Regularisation for Continual Learning with {Gaussian} Processes},
  author = {Titsias, Michalis K. and Schwarz, Jonathan and Matthews, Alexander G. de G. and Pascanu, Razvan and Teh, Yee Whye},
  booktitle = {International Conference on Learning Representations},
  year = {2020},
  url = {https://iclr.cc/virtual/2020/poster/1650}
}

@inproceedings{pan2020fromp,
  title = {Continual Deep Learning by Functional Regularisation of Memorable Past},
  author = {Pan, Pingbo and Swaroop, Siddharth and Immer, Alexander and Eschenhagen, Runa and Turner, Richard E. and Khan, Mohammad Emtiyaz},
  booktitle = {Advances in Neural Information Processing Systems},
  volume = {33},
  pages = {4453--4464},
  year = {2020},
  url = {https://proceedings.neurips.cc/paper/2020/hash/2f3bbb9730639e9ea48f309d9a79ff01-Abstract.html}
}

@inproceedings{buzzega2020dark,
  title = {Dark Experience for General Continual Learning: A Strong, Simple Baseline},
  author = {Buzzega, Pietro and Boschini, Matteo and Porrello, Angelo and Abati, Davide and Calderara, Simone},
  booktitle = {Advances in Neural Information Processing Systems},
  volume = {33},
  pages = {15920--15930},
  year = {2020},
  url = {https://proceedings.neurips.cc/paper/2020/hash/b704ea2c39778f07c617f6b7ce480e9e-Abstract.html}
}

@article{kirkpatrick2017ewc,
  title = {Overcoming Catastrophic Forgetting in Neural Networks},
  author = {Kirkpatrick, James and Pascanu, Razvan and Rabinowitz, Neil and Veness, Joel and Desjardins, Guillaume and Rusu, Andrei A. and Milan, Kieran and Quan, John and Ramalho, Tiago and Grabska-Barwinska, Agnieszka and Hassabis, Demis and Clopath, Claudia and Kumaran, Dharshan and Hadsell, Raia},
  journal = {Proceedings of the National Academy of Sciences},
  volume = {114},
  number = {13},
  pages = {3521--3526},
  year = {2017},
  doi = {10.1073/pnas.1611835114},
  url = {https://doi.org/10.1073/pnas.1611835114}
}

@inproceedings{wang2022l2p,
  title = {Learning to Prompt for Continual Learning},
  author = {Wang, Zifeng and Zhang, Zizhao and Lee, Chen-Yu and Zhang, Han and Sun, Ruoxi and Ren, Xiaoqi and Su, Guolong and Perot, Vincent and Dy, Jennifer and Pfister, Tomas},
  booktitle = {Proceedings of the IEEE/CVF Conference on Computer Vision and Pattern Recognition},
  year = {2022},
  pages = {139--149},
  doi = {10.1109/CVPR52688.2022.00024},
  url = {https://openaccess.thecvf.com/content/CVPR2022/html/Wang_Learning_To_Prompt_for_Continual_Learning_CVPR_2022_paper.html}
}

@inproceedings{niu2024tpp,
  title = {Replay-and-Forget-Free Graph Class-Incremental Learning: A Task Profiling and Prompting Approach},
  author = {Niu, Chaoxi and Pang, Guansong and Chen, Ling and Liu, Bing},
  booktitle = {Advances in Neural Information Processing Systems},
  volume = {37},
  pages = {87978--88002},
  year = {2024},
  doi = {10.52202/079017-2793},
  url = {https://proceedings.neurips.cc/paper_files/paper/2024/hash/a07e87ecfa8a651d62257571669b0150-Abstract-Conference.html}
}

@inproceedings{wang2026g2lora,
  title = {{$G^2$LoRA}: Gradient Orthogonal Low-Rank Adaptation Framework for Graph Continual Learning on Text-Attributed Graphs},
  author = {Wang, Yuhan and Ding, Yibo and Ye, Yutong and Zhao, Mufan and Zhang, Wenbo and Wang, Ruijie and Li, Jianxin},
  booktitle = {Proceedings of the 32nd ACM SIGKDD Conference on Knowledge Discovery and Data Mining V.2},
  pages = {5184--5195},
  year = {2026},
  publisher = {ACM},
  doi = {10.1145/3770855.3817966},
  url = {https://doi.org/10.1145/3770855.3817966}
}

@inproceedings{li2016learning,
  title = {Learning without Forgetting},
  author = {Li, Zhizhong and Hoiem, Derek},
  booktitle = {European Conference on Computer Vision},
  series = {Lecture Notes in Computer Science},
  volume = {9908},
  pages = {614--629},
  year = {2016},
  publisher = {Springer International Publishing},
  doi = {10.1007/978-3-319-46493-0_37},
  url = {https://doi.org/10.1007/978-3-319-46493-0_37}
}

@inproceedings{zhao2022dkd,
  title = {Decoupled Knowledge Distillation},
  author = {Zhao, Borui and Cui, Quan and Song, Renjie and Qiu, Yiyu and Liang, Jiajun},
  booktitle = {Proceedings of the IEEE/CVF Conference on Computer Vision and Pattern Recognition},
  pages = {11943--11952},
  year = {2022},
  doi = {10.1109/CVPR52688.2022.01165},
  url = {https://doi.org/10.1109/CVPR52688.2022.01165}
}

@inproceedings{schulman2015trpo,
  title = {Trust Region Policy Optimization},
  author = {Schulman, John and Levine, Sergey and Abbeel, Pieter and Jordan, Michael and Moritz, Philipp},
  booktitle = {Proceedings of the International Conference on Machine Learning},
  year = {2015},
  pages = {1889--1897},
  url = {https://proceedings.mlr.press/v37/schulman15.html}
}

@inproceedings{kao2021ncl,
  title = {Natural Continual Learning: Success Is a Journey, Not (Just) a Destination},
  author = {Kao, Ta-Chu and Jensen, Kristopher T. and van de Ven, Gido M. and Bernacchia, Alberto and Hennequin, Guillaume},
  booktitle = {Advances in Neural Information Processing Systems},
  volume = {34},
  pages = {28067--28079},
  year = {2021},
  url = {https://proceedings.neurips.cc/paper/2021/hash/ec5aa0b7846082a2415f0902f0da88f2-Abstract.html}
}

@inproceedings{lin2022trgp,
  title = {{TRGP}: Trust Region Gradient Projection for Continual Learning},
  author = {Lin, Sen and Yang, Li and Fan, Deliang and Zhang, Junshan},
  booktitle = {International Conference on Learning Representations},
  year = {2022},
  url = {https://iclr.cc/virtual/2022/poster/6867}
}

@article{wang2026trcl,
  title = {Trust Region Continual Learning as an Implicit Meta-Learner},
  author = {Wang, Zekun and Gupta, Anant and MacLellan, Christopher J.},
  journal = {arXiv preprint arXiv:2602.02417},
  year = {2026},
  url = {https://arxiv.org/abs/2602.02417}
}

@inproceedings{cermelli2020mib,
  title = {Modeling the Background for Incremental Learning in Semantic Segmentation},
  author = {Cermelli, Fabio and Mancini, Massimiliano and {Rota Bul\`o}, Samuel and Ricci, Elisa and Caputo, Barbara},
  booktitle = {Proceedings of the IEEE/CVF Conference on Computer Vision and Pattern Recognition},
  pages = {9230--9239},
  year = {2020},
  doi = {10.1109/CVPR42600.2020.00925},
  url = {https://doi.org/10.1109/CVPR42600.2020.00925}
}

@inproceedings{hu2022lora,
  title = {{LoRA}: Low-Rank Adaptation of Large Language Models},
  author = {Hu, Edward J. and Shen, Yelong and Wallis, Phillip and Allen-Zhu, Zeyuan and Li, Yuanzhi and Wang, Shean and Wang, Lu and Chen, Weizhu},
  booktitle = {International Conference on Learning Representations},
  year = {2022},
  url = {https://iclr.cc/virtual/2022/poster/6319}
}

@inproceedings{kipf2017gcn,
  title = {Semi-Supervised Classification with Graph Convolutional Networks},
  author = {Kipf, Thomas N. and Welling, Max},
  booktitle = {International Conference on Learning Representations},
  year = {2017},
  url = {https://openreview.net/pdf?id=SJU4ayYgl}
}

@inproceedings{wang2023teen,
  title = {Few-Shot Class-Incremental Learning via Training-Free Prototype Calibration},
  author = {Wang, Qi-Wei and Zhou, Da-Wei and Zhang, Yi-Kai and Zhan, De-Chuan and Ye, Han-Jia},
  booktitle = {Advances in Neural Information Processing Systems},
  volume = {36},
  pages = {15060--15076},
  year = {2023},
  doi = {10.52202/075280-0661},
  url = {https://proceedings.neurips.cc/paper_files/paper/2023/hash/30dfe47a3ccbee68cffa0c19ccb1bc00-Abstract-Conference.html}
}

@inproceedings{devlin2019bert,
  title = {{BERT}: Pre-training of Deep Bidirectional Transformers for Language Understanding},
  author = {Devlin, Jacob and Chang, Ming-Wei and Lee, Kenton and Toutanova, Kristina},
  booktitle = {Proceedings of the 2019 Conference of the North American Chapter of the Association for Computational Linguistics: Human Language Technologies, Volume 1 (Long and Short Papers)},
  year = {2019},
  pages = {4171--4186},
  publisher = {Association for Computational Linguistics},
  doi = {10.18653/v1/N19-1423},
  url = {https://aclanthology.org/N19-1423/}
}

@article{zhou2025simplecil,
  title = {Revisiting Class-Incremental Learning with Pre-Trained Models: Generalizability and Adaptivity Are All You Need},
  author = {Zhou, Da-Wei and Cai, Zi-Wen and Ye, Han-Jia and Zhan, De-Chuan and Liu, Ziwei},
  journal = {International Journal of Computer Vision},
  volume = {133},
  number = {3},
  pages = {1012--1032},
  year = {2025},
  doi = {10.1007/s11263-024-02218-0},
  url = {https://doi.org/10.1007/s11263-024-02218-0}
}

@inproceedings{tang2024graphgpt,
  title = {{GraphGPT}: Graph Instruction Tuning for Large Language Models},
  author = {Tang, Jiabin and Yang, Yuhao and Wei, Wei and Shi, Lei and Su, Lixin and Cheng, Suqi and Yin, Dawei and Huang, Chao},
  booktitle = {Proceedings of the 47th International ACM SIGIR Conference on Research and Development in Information Retrieval},
  pages = {491--500},
  year = {2024},
  publisher = {ACM},
  doi = {10.1145/3626772.3657775},
  url = {https://doi.org/10.1145/3626772.3657775}
}

@inproceedings{liang2024inflora,
  title = {{InfLoRA}: Interference-Free Low-Rank Adaptation for Continual Learning},
  author = {Liang, Yan-Shuo and Li, Wu-Jun},
  booktitle = {Proceedings of the IEEE/CVF Conference on Computer Vision and Pattern Recognition},
  pages = {23638--23647},
  year = {2024},
  doi = {10.1109/CVPR52733.2024.02231},
  url = {https://openaccess.thecvf.com/content/CVPR2024/html/Liang_InfLoRA_Interference-Free_Low-Rank_Adaptation_for_Continual_Learning_CVPR_2024_paper.html}
}

@inproceedings{chen2024llaga,
  title = {{LLaGA}: Large Language and Graph Assistant},
  author = {Chen, Runjin and Zhao, Tong and Jaiswal, Ajay Kumar and Shah, Neil and Wang, Zhangyang},
  booktitle = {Proceedings of the 41st International Conference on Machine Learning},
  series = {Proceedings of Machine Learning Research},
  volume = {235},
  pages = {7809--7823},
  year = {2024},
  publisher = {PMLR},
  url = {https://proceedings.mlr.press/v235/chen24bh.html}
}

@article{liu2019roberta,
  title = {{RoBERTa}: A Robustly Optimized {BERT} Pretraining Approach},
  author = {Liu, Yinhan and Ott, Myle and Goyal, Naman and Du, Jingfei and Joshi, Mandar and Chen, Danqi and Levy, Omer and Lewis, Mike and Zettlemoyer, Luke and Stoyanov, Veselin},
  journal = {arXiv preprint arXiv:1907.11692},
  year = {2019},
  url = {https://arxiv.org/abs/1907.11692}
}

@inproceedings{wu2025sdlora,
  title = {{SD-LoRA}: Scalable Decoupled Low-Rank Adaptation for Class Incremental Learning},
  author = {Wu, Yichen and Piao, Hongming and Huang, Long-Kai and Wang, Renzhen and Li, Wanhua and Pfister, Hanspeter and Meng, Deyu and Ma, Kede and Wei, Ying},
  booktitle = {The Thirteenth International Conference on Learning Representations},
  year = {2025},
  url = {https://proceedings.iclr.cc/paper_files/paper/2025/hash/92f43b1d33fae4aa1958f75317f0cec1-Abstract-Conference.html}
}
\bibliographystyle{iclr2027_conference}

\appendix
\section{HiRP: Derivations and Implementation Details}
\label{app:method-details}

\subsection{Response design in context}
\label{app:response-comparison}

Table~\ref{tab:response-comparison} distinguishes the protected quantities and their input support.
HiRP organizes semantic competition by class arrival without sample-level historical labels: the frozen classifier scores current class descriptions at task entry.
Individual historical probabilities and collective current mass represent this structure, checked against current-iterate and task-start references.

\begin{table}[!htbp]
\centering
\caption{Response protection in related formulations. The comparison concerns the protected object and its inputs, not a claim that probability aggregation or functional regularization is unique to HiRP.}
\label{tab:response-comparison}
\small
\renewcommand{\arraystretch}{1.15}
\begin{tabular}{@{}p{.14\linewidth}p{.40\linewidth}p{.39\linewidth}@{}}
\toprule
Method & Protected quantity & Inputs and enforcement \\
\midrule
LwF & Historical-task outputs & Current-task inputs; distillation loss. \\
FRCL / FROMP & Function posteriors / past predictions & Inducing inputs / memorable examples; functional regularization. \\
MiB & Historical-class outputs and aggregated background/current-class probability & Segmentation inputs; background-aware distillation. \\
DKD & Target-versus-rest mass and non-target conditional distribution & Labeled distillation inputs; separately weighted losses. \\
\midrule
HiRP & Historical discrimination and aggregate group competition, represented by individual historical probabilities and current-group mass & Bounded unlabeled features; local and task-level KL budgets on the same expanded candidates, with checked updates. \\
\bottomrule
\end{tabular}
\end{table}

\subsection{Semantic competition and historical prediction errors}
\label{app:competition-geometry}

Fix a task $t>1$, a nonzero readout $\mathbf{W}\widetilde{\mathbf{h}}(x)$, and finite scores on the common candidate set $S=\mathcal O_t\cup\mathcal N_t$.
Write $O=\mathcal O_t$, $N=\mathcal N_t$, and $\boldsymbol{\nu}=\mathbf{W}\widetilde{\mathbf{h}}/\|\mathbf{W}\widetilde{\mathbf{h}}\|$.
The scale $\gamma>0$ and unit class vectors $\mathbf{u}_c$ are fixed.

\begin{theorem}[Semantic margins and predictive responses]
\label{prop:competition-geometry}
For every pair $i,j\in S$,
\begin{equation}
    \log\frac{p_i}{p_j}=\gamma(\mathbf{u}_i-\mathbf{u}_j)^\top\boldsymbol{\nu}.
    \label{eq:appendix-geometry-odds}
\end{equation}
If $\mathbf{u}_i\ne \mathbf{u}_j$, dividing this quantity by $\gamma\|\mathbf{u}_i-\mathbf{u}_j\|$ gives the signed distance from $\boldsymbol{\nu}$ to the pairwise decision hyperplane in the normalized feature coordinates.
Furthermore, the historical-versus-current margin is
\begin{equation}
    m=\max_{o\in O}s_o-\max_{n\in N}s_n
     =\log\frac{\max_{o\in O}p_o}{\max_{n\in N}p_n}.
    \label{eq:margin-probability-identity}
\end{equation}
\end{theorem}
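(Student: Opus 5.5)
The plan is to verify the three claims in turn directly from the definitions in Equations~\eqref{eq:class-score} and~\eqref{eq:global-prediction}; no earlier result beyond these definitions is needed.

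\emph{Log-odds identity.} Both $p_i$ and $p_j$ are computed against the same candidate set $S$, so they share the softmax denominator $Z=\sum_{k\in S}\exp s_k$. Hence $p_i/p_j=\exp(s_i-s_j)$, and taking logarithms gives $\log(p_i/p_j)=s_i-s_j$. Substituting Equation~\eqref{eq:class-score}, the common factor $\gamma\boldsymbol{\nu}$ (with $\boldsymbol{\nu}=\mathbf{W}\widetilde{\mathbf{h}}/\|\mathbf{W}\widetilde{\mathbf{h}}\|$) gives $s_i-s_j=\gamma(\mathbf{u}_i-\mathbf{u}_j)^\top\boldsymbol{\nu}$. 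This uses that $\mathbf{W}\widetilde{\mathbf{h}}\neq 0$, so that $\boldsymbol{\nu}$ is well defined, and that the scores are finite, so all $p_k>0$ and the logarithm exists.

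\emph{Geometric interpretation.} The pairwise decision boundary in the normalized coordinates is the set $\{\mathbf{z}:(\mathbf{u}_i-\mathbf{u}_j)^\top\mathbf{z}=0\}$, which is a hyperplane through the origin with normal $\mathbf{w}=\mathbf{u}_i-\mathbf{u}_j\neq 0$. The signed distance from a point $\mathbf{z}$ to such a hyperplane is $\mathbf{w}^\top\mathbf{z}/\|\mathbf{w}\|$. Dividing the log-odds by $\gamma\|\mathbf{u}_i-\mathbf{u}_j\|$ gives exactly this quantity evaluated at $\boldsymbol{\nu}$.

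\emph{Margin identity.} The map $s\mapsto e^{s}/Z$ is strictly increasing with the same $Z$ for every class. Therefore $\arg\max_{o\in O}s_o$ and $\arg\max_{o\in O}p_o$ coincide, and $\max_{o\in O}p_o=\exp(\max_{o\in O}s_o)/Z$; the same holds over $N$. Both sets are nonempty because $t>1$ and each task introduces classes. Taking the ratio cancels $Z$, and taking the logarithm yields $m$.

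There is no real obstacle here. The only delicate point is making sure that the shared denominator and the monotonicity are applied over the \emph{same} candidate set $S$. In particular, the max-probability identity would fail if $p$ were renormalized within $O$ or within $N$ separately.
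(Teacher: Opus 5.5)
Your proposal is correct and follows the paper's own argument essentially line for line: cancel the shared softmax denominator over $S$, substitute the cosine score, read off the signed distance to the hyperplane $(\mathbf{u}_i-\mathbf{u}_j)^\top\boldsymbol{\nu}=0$, and use monotonicity of the exponential to carry the cancellation through the maxima. Your side remarks are sound and make explicit conditions the paper states only in its setup or leaves implicit: $O$ and $N$ are nonempty, the readout $\mathbf{W}\widetilde{\mathbf{h}}$ is nonzero, the scores are finite, and both probabilities must be normalized over the same candidate set.
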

\noindent\textit{Proof.}
The common softmax denominator cancels in $p_i/p_j$.
Taking logarithms and substituting the score definition yields Equation~\eqref{eq:appendix-geometry-odds}.
The decision hyperplane is $(\mathbf{u}_i-\mathbf{u}_j)^\top\boldsymbol{\nu}=0$; normalization by its normal's length gives the distance.
Since exponentiation preserves order, the same cancellation applies to the maxima in Equation~\eqref{eq:margin-probability-identity}.\hfill\rule{0.7ex}{0.7ex}

\paragraph{How competition contributes to historical errors.}
Consider any fixed distribution of labeled historical samples $(x,y)$ with $y\in O$ and assume no score ties.
Let $\hat y_O$ and $\hat y_S$ be the predictions on $O$ and $S$, and let $A_O,A_S$ be their accuracies for the same parameters.
Then
\begin{equation}
    A_O-A_S
    =\Pr\!\left\{\hat y_O=y,\ \max_{n\in N}s_n>s_y\right\}.
    \label{eq:competition-error-event}
\end{equation}
Indeed, a sample correct under $S$ is also correct under $O$.
The remaining samples correct under $O$ become incorrect precisely when a current class outranks $y$.
Thus the competition gap in Equation~\eqref{eq:accuracy-decomposition} is the frequency of this event, and adding and subtracting the post-update $A_O$ gives that decomposition exactly.
For a sample correct at the reference, preserved historical ranking together with $m<0$ after updating is a correct-to-incorrect change caused by the current group winning.
Unconditional group crossings also include previously incorrect samples.
On fixed candidates, the net accuracy drop equals the correct-to-incorrect rate minus the incorrect-to-correct rate; neither rate alone is the full-sequence signed AF.
Candidate expansion can already introduce errors at task entry, which a subsequent drift constraint need not repair.

\paragraph{Which competition is retained by the hierarchy.}
The class-arrival partition is $\Pi_t=\{\{o\}:o\in O\}\cup\{N\}$.
It organizes the existing semantic candidates into the competitive relationships retained by the response: each historical class has a separate coordinate, while current arrivals share one.
At the next task, current classes join the historical set and receive separate coordinates; subsequent arrivals form the aggregate group.
The hierarchy concerns these response coordinates; the semantic embedding dimension remains fixed.
The full probability vector determines all pairwise score margins and recovers scores up to a common additive offset.
The hierarchical response intentionally retains less information.
With $r_N(n)=p_n/q_N$ and $\operatorname{LSE}(\mathbf{s}_G)=\log\sum_{c\in G}e^{s_c}$,
\begin{align}
    \log\frac{q_O}{q_N}
      &=\operatorname{LSE}(\mathbf{s}_O)-\operatorname{LSE}(\mathbf{s}_N),\\
    m&=\log\frac{q_O}{q_N}+\log\max_o r_O(o)-\log\max_n r_N(n).
    \label{eq:group-margin-decomposition}
\end{align}
Hence $\mathbf{r}=((p_o)_{o\in O},q_N)$ specifies historical conditional discrimination and aggregate group competition, while leaving the current-group conditional distribution free.
These are identities for the model's scores; they neither require calibrated posteriors nor recover every aspect of feature geometry.

\paragraph{Historical-node margins in Figure~\ref{fig:challenges}.}
\label{app:intro-geometry-probe}
The figure evaluates the same 200 Photo test nodes (100 Film Photography and 100 Flashes) before and after learning Sports-Fitness at task 5 of the six-task setting, using the SGD readout.
Both checkpoints score the same 35 historical classes and 13 Sports-Fitness classes.
For true class $y$, the horizontal coordinate is $s_y-\max_{o\in\mathcal O_t\setminus\{y\}}s_o$ and the vertical coordinate is $s_y-\max_{n\in\mathcal N_t}s_n$.
These are score margins computed directly from the logits, rather than a projection of node embeddings.
The zero lines mark ties with the true class; the diagonal in the lower-left quadrant separates historical-class and Sports-Fitness errors when both beat the true class.
Colors show the winning class group, and shapes identify the two Photo classes.
After adaptation, 26 of these nodes retain a positive historical margin but are predicted as Sports-Fitness.

\paragraph{Continuous boundary trajectories.}
\label{app:boundary-trajectories}
Figure~\ref{fig:motivation-radial-stalling} and Figure~\ref{fig:case-boundary-learning}c,d use the same fixed 16-node training minibatch from CiteSeer, Task 2 of the four-task setting.
Starting from the saved task-1 readout, we replay the first 120 task-2 updates in their original data order to obtain a state inside the protection boundary.
The two update rules independently optimize the fixed minibatch from this state, sharing the support, response definition, and budgets.
Their first 17 updates produce identical parameters, losses, and update norms.
Relative step 0 is the next update, whose initial task KL is $0.004827$, above 95\% of its $0.005$ budget.
The local step-radius budget is $0.002$ through step 0, then decreases linearly to $0.0002$ over 32 further updates; the task-level KL budget remains fixed.
All accepted updates satisfy both nonlinear KL checks. At the final point, training loss is $1.26672$ for radial shrinking and $1.06853$ for tangent correction.
These curves measure optimization on the fixed training minibatch; task-level test results are reported separately below.

\paragraph{Single-step boundary diagnostic.}
\label{app:boundary-single-step}
We additionally probe a saved task-2 hierarchical/radial checkpoint from the four-task setting to compare individual feasible updates.
The task KL is at its $0.005$ limit. Both update rules use the same first 16-node training minibatch, support, gradient, and budgets.
The raw proposal violates the task limit ($0.005062$); radial scaling yields an L2 step of $1.29\!\times\!10^{-7}$, whereas the feasible tangent step has length $0.00713$ and lowers the batch cross-entropy from $1.782$ to $1.741$.
The corrected step has local KL $1.39\!\times\!10^{-6}<2\!\times\!10^{-6}$.
Tangent correction reserves inward slack and is followed by nonlinear feasibility checks, so the accepted step need not lie exactly along the first-order tangent.
This one-step diagnostic measures update feasibility and batch loss; the full-task comparison is reported separately in Section~\ref{sec:case-study}.

\subsection{Hierarchical response and conditional prediction stability}
\label{app:response-details}

Fix a task with nonempty historical and current class sets, and suppress the input and task indices.
All probabilities below are strictly positive for finite logits.
Expanding the KL of the hierarchical response gives
\begin{align}
    D_H
    &=\sum_{o\in\mathcal O_t}p_o^a\log\frac{p_o^a}{p_o}
      +q_N^a\log\frac{q_N^a}{q_N} \notag\\
    &=\sum_{o\in\mathcal O_t}q_O^a r_O^a(o)
      \left(\log\frac{q_O^a}{q_O}+\log\frac{r_O^a(o)}{r_O(o)}\right)
      +q_N^a\log\frac{q_N^a}{q_N} \notag\\
    &=D_{\mathrm{KL}}\!\left((q_O^a,q_N^a)\,\|\,(q_O,q_N)\right)
      +q_O^a D_{\mathrm{KL}}(\mathbf{r}_O^a\|\mathbf{r}_O).
\end{align}

\paragraph{Theorem~\ref{prop:response-stability}: statement and proof.}
For a reference class $c\in O$, define
\begin{equation}
    \delta_R=p_c^a-\max\!\left\{\max_{o\in O\setminus\{c\}}p_o^a,\ q_N^a\right\}>0,
    \label{eq:response-margin}
\end{equation}
where the inner maximum is zero if $O=\{c\}$.
If $D_H<\delta_R^2/2$, the updated unified prediction remains $c$, regardless of redistribution within $N$.
When $c$ is the true label, this excludes a correct-to-incorrect change at this input.
\noindent\textit{Proof.}
Pinsker's inequality gives $\eta=\operatorname{TV}(\mathbf{r}^a,\mathbf{r})\leq\sqrt{D_H/2}$, so every response coordinate changes by at most $\eta$.
For each other historical class $o$, $p_c-p_o\geq\delta_R-2\eta>0$.
Likewise, $p_c-q_N\geq\delta_R-2\eta>0$.
Since $p_n\leq q_N$ for each $n\in N$, class $c$ exceeds every candidate.\hfill\rule{0.7ex}{0.7ex}

This certificate is sufficient, not necessary: reference correctness alone does not imply $\delta_R>0$, because the certificate compares one historical class against the entire current-group mass.
At the response level, probabilities over historical A/B and current C/D can change from $(0.40,0.10,0.25,0.25)$ to $(0.40,0.10,0.49,0.01)$.
The hierarchical response stays $(0.40,0.10,0.50)$ and $D_H=0$, yet the winner changes from A to C.
Thus preserving aggregate competition alone cannot guarantee a fixed strongest-competitor margin.

\paragraph{Separate conditional and group guarantees.}
In particular, $D_{\mathrm{KL}}(\mathbf{r}_O^a\|\mathbf{r}_O)\leq D_H/q_O^a$.
Suppose there are at least two historical classes and the reference conditional distribution has a unique maximizer $c_1$, with gap
\begin{equation}
    \Delta_O=r_O^a(c_1)-\max_{o\neq c_1}r_O^a(o)>0.
\end{equation}
If $D_H<q_O^a\Delta_O^2/2$, then the highest-scoring historical class remains $c_1$.
To see this, Pinsker's inequality gives
$\operatorname{TV}(\mathbf{r}_O^a,\mathbf{r}_O)\leq\sqrt{D_{\mathrm{KL}}(\mathbf{r}_O^a\|\mathbf{r}_O)/2}$.
The probability of any single class changes by at most this total variation.
Hence every pairwise gap between $c_1$ and another historical class remains positive when
$2\operatorname{TV}(\mathbf{r}_O^a,\mathbf{r}_O)<\Delta_O$.
This establishes stability of the historical maximizer, not correctness with respect to the true label or dominance over current classes.

One can also obtain a conservative condition for a historical class to remain globally dominant.
Let $\epsilon_m=D_{\mathrm{KL}}((q_O^a,q_N^a)\|(q_O,q_N))$,
$\epsilon_c=D_{\mathrm{KL}}(\mathbf{r}_O^a\|\mathbf{r}_O)$,
$\eta_m=\sqrt{\epsilon_m/2}$, $\eta_c=\sqrt{\epsilon_c/2}$, and $r_*^a=\max_o r_O^a(o)$.
Writing $[z]_+=\max(z,0)$, the condition
\begin{equation}
    [q_O^a-\eta_m]_+[r_*^a-\eta_c]_+>q_N^a+\eta_m
\end{equation}
ensures that a historical-class probability exceeds every current-class probability.
The left-hand side lower-bounds the probability of the reference best historical class; the right-hand side upper-bounds total current mass and therefore its largest individual probability.
Combining this condition with conditional ranking stability preserves the reference best historical class under unified prediction.
The condition is sufficient and can be loose because it bounds the strongest current competitor by the entire group mass.

These statements are pointwise.
For the empirical source-weighted distribution $w$ used in training, an average bound $\mathbb E_w[D_H]\leq\epsilon$ implies only
\begin{equation}
    \Pr_{x\sim w}\{D_H(x)\geq\tau\}\leq\epsilon/\tau,
    \qquad \tau>0,
\end{equation}
by Markov's inequality.
It does not provide a test-distribution guarantee or uniform conditional protection when $q_O^a$ is small.

\subsection{Implicit Fisher products}
\label{app:fisher-products}

All quantities in this subsection are evaluated at the current iterate $\boldsymbol{\theta}_k$.
For support input $x$, let $\mathbf{z}(x)=(s_c(x))_{c\in\mathcal C_{\leq t}}$,
$\mathbf{y}=\mathbf{W}_k\widetilde{\mathbf{h}}$, $\boldsymbol{\nu}=\mathbf{y}/\|\mathbf{y}\|_2$, and let $\mathbf{U}$ have normalized class embeddings as rows.
For a parameter direction $\mathbf{v}=\operatorname{vec}(\mathbf{V})$, the score Jacobian product is
\begin{equation}
    \dot{\mathbf{z}}=\mathbf{J}_x\mathbf{v}
    =\gamma \mathbf{U}\frac{(\mathbf{I}-\boldsymbol{\nu}\boldsymbol{\nu}^\top)\mathbf{V}\widetilde{\mathbf{h}}}{\|\mathbf{y}\|_2}.
\end{equation}
The implementation clamps the norm used in these derivative calculations below by $10^{-8}$.
For nonempty historical and current groups, write $\mathbf{r}_O$ and $\mathbf{r}_N$ for their respective conditional softmax distributions and set
$\mathbf{b}=[\mathbf{r}_O;-\mathbf{r}_N]$, with historical classes listed first.
The response Fisher in logit coordinates is
\begin{equation}
    \mathbf{H}_x=q_Oq_N\mathbf{b}\mathbf{b}^\top+
    \begin{bmatrix}
        q_O\big(\operatorname{Diag}(\mathbf{r}_O)-\mathbf{r}_O\mathbf{r}_O^\top\big)&0\\
        0&0
    \end{bmatrix}.
\end{equation}
The first term measures local curvature of aggregate group competition; the second is the historical conditional Fisher weighted by $q_O$.
Consequently,
\begin{equation}
    \mathbf{G}_k\mathbf{v}=\frac{1}{K}\sum_{r=1}^{K}\frac{1}{|\mathcal S_r|}
    \sum_{x\in\mathcal S_r}\mathbf{J}_x^\top \mathbf{H}_x(\mathbf{J}_x\mathbf{v})+\lambda \mathbf{v}.
\end{equation}
The conditional curvature product is computed as
$\mathbf{r}_O\odot\big(\dot{\mathbf{z}}_O-(\mathbf{r}_O^\top\dot{\mathbf{z}}_O)\mathbf{1}\big)$,
without materializing its class-by-class matrix.
Likewise, Jacobian products and their transposes are evaluated through the normalized readout.
The source weights are $1/(K|\mathcal S_r|)$ per row.
No dense parameter Fisher or Hessian is stored.
When the historical group is empty, the response is constant and only damping remains.

\subsection{Support construction}
\label{app:support-details}

The initial support contains external unlabeled reference features, with at most 1024 retained rows.
At the end of task $t$, a seeded random permutation of the current training features selects at most 512 rows without replacement.
These form an additional historical source.
Once any historical source exists, the external reference retains at most 512 rows.
The remaining portion of the 1024-row budget is apportioned among historical sources in proportion to their currently retained row counts, using integer quotas and deterministic remainder allocation.
When capacity permits, every historical source retains at least one row.
Within a source, deterministic evenly spaced indices select the retained rows.
Repeated compression can progressively reduce early-task coverage; the memory is not a balanced reservoir over all historical training samples.

Stored source metadata identifies the task, source, and candidate classes, but individual historical node labels are not used for supervision.
Although checkpoints also retain response-related state, each task recomputes the constraints on surviving feature rows with its expanded candidates and task anchor.
They do not compare distributions with mismatched candidate sets.
The feature-row budget excludes class embeddings, reference responses, readout anchors, and temporary optimization buffers.
Feature storage is $O(M(d+1))$, and the readout contains $m(d+1)$ parameters.

\subsection{Proposal correction and feasibility search}
\label{app:solver-details}

\begin{hirpalgorithm}{HiRP training and inference}
\label{alg:hirp}
\textbf{Inputs:} frozen node features $\mathbf{h}$, class-text encoder, pretrained readout $\mathbf{W}$; current labeled task data; external unlabeled support; $M=1024$, $\delta=0.002$, $\rho=0.1$, $\lambda=0.01$.\par
\textbf{Initialize:} retain at most $M$ external feature rows; set $L=\delta^2/2$, $B=\rho^2/2$.\par
\begin{enumerate}
\setlength{\itemsep}{2pt}\setlength{\parsep}{0pt}\setlength{\topsep}{3pt}
\item \textbf{For each arriving task $t$, repeat steps 2--8:}
  encode newly arrived class descriptions; form $\mathcal O_t$, $\mathcal N_t$, and $\mathcal C_{\leq t}$.
\item Freeze the task support $\mathcal M_t$ and set $\boldsymbol{\theta}_0=\operatorname{vec}(\mathbf{W})$.
  Evaluate $\mathbf{r}_t(x;\boldsymbol{\theta}_0)$ on the expanded candidates for every support row.
\item \textbf{For each current-data minibatch in the three training epochs, repeat steps 4--7:}
  compute the label-smoothed current cross-entropy gradient $\mathbf{g}_k$; set
  $\mathbf{G}_k=\mathbf{F}_k+\lambda \mathbf{I}$ using the current-iterate response.
\item Check that $\widehat D_t(\boldsymbol{\theta}_0,\boldsymbol{\theta}_k)\leq(1+10^{-5})B$; otherwise raise an error.
  Compute $\mathbf{v}_k\approx \mathbf{G}_k^{-1}\mathbf{g}_k$ by CG (at most 12 steps); form $\mathbf{d}_k$ using Equation~\eqref{eq:natural-proposal}, flooring $\mathbf{g}_k^\top \mathbf{v}_k$ at $10^{-20}$.
  A zero gradient produces a zero proposal.
\item If $\widehat D_t(\boldsymbol{\theta}_0,\boldsymbol{\theta}_k)\geq0.95B$, compute $\mathbf{a}_k$ and $\mathbf{u}_k\approx \mathbf{G}_k^{-1}\mathbf{a}_k$.
  If $\mathbf{a}_k^\top \mathbf{d}_k>0$, $\mathbf{a}_k^\top \mathbf{u}_k>10^{-20}$, and the projected norm exceeds $10^{-12}$, form Equations~\eqref{eq:tangent-proposal} and~\eqref{eq:inward-proposal} and rescale into the local metric ball.
  Replace $\mathbf{d}_k$ with this correction only when its inner product with $\mathbf{g}_k$ is negative.
\item Set $\boldsymbol{\theta}'=\boldsymbol{\theta}_k+\mathbf{d}_k$.
  If local KL exceeds $L$, shrink the displacement by 48 interval-search steps retaining an evaluated feasible endpoint.
  If task KL then exceeds $B$, repeat this search on the remaining displacement.
\item Recompute both nonlinear KL values.
  Commit $\boldsymbol{\theta}_{k+1}=\boldsymbol{\theta}'$ only if finite and both pass relative tolerance $10^{-4}$; otherwise raise an error without committing.
\item \textbf{At task end:} set $\mathbf{W}$ to the final iterate reshaped as a readout matrix; sample up to 512 current-training features without their supervision labels, append their source metadata, and compress to $M$ rows by Appendix~\ref{app:support-details}.
\end{enumerate}
\textbf{Inference:} use the shared readout and all arrived class descriptions in Equation~\eqref{eq:global-prediction}; no task routing or score calibration.
\end{hirpalgorithm}

Conjugate gradients start from zero and stop after 12 iterations or when the residual norm is at most $10^{-5}$ times its initial value.
The proposal denominator uses $\mathbf{g}_k^\top \mathbf{v}_k$ with a numerical floor of $10^{-20}$.
No historical supervised loss is included in $\mathbf{g}_k$; the current minibatch cross-entropy uses label smoothing of $0.2$.

Set $B=\rho^2/2$ and $L=\delta^2/2$.
If the current task divergence is at least $0.95B$, compute its gradient $\mathbf{a}_k$ and the CG approximation $\mathbf{u}_k$ to $\mathbf{G}_k^{-1}\mathbf{a}_k$.
The tangent correction is attempted only when $\mathbf{a}_k^\top \mathbf{d}_k>0$ and $\mathbf{a}_k^\top \mathbf{u}_k>10^{-20}$.
After projection, a metric norm exceeding $10^{-12}$ is required before normalization.
We normalize the tangent proposal and introduce inward slack using
\begin{equation}
    \bar{\mathbf{d}}_k=\delta\frac{\mathbf{d}_k^{\mathrm{tan}}}{\|\mathbf{d}_k^{\mathrm{tan}}\|_{\mathbf{G}_k}},
    \qquad
    \mathbf{d}_k^{\mathrm{in}}=\bar{\mathbf{d}}_k
    -\frac{\|\bar{\mathbf{d}}_k\|_{\mathbf{G}_k}^2}{\mathbf{a}_k^\top \mathbf{u}_k}\mathbf{u}_k.
    \label{eq:inward-proposal}
\end{equation}
The inward adjustment in Equation~\eqref{eq:inward-proposal} is then rescaled by
$\min(1,\delta/\|\mathbf{d}_k^{\mathrm{in}}\|_{\mathbf{G}_k})$.
It replaces the original proposal only if its inner product with $\mathbf{g}_k$ is negative.
With an exact inverse, the tangent term removes the linearized outward component, and the inward adjustment contributes $-\|\bar{\mathbf{d}}_k\|_{\mathbf{G}_k}^2$ to the task-divergence directional derivative.
This supplies first-order inward slack, rather than an analytic bound on all higher-order terms.

The solver first evaluates the local KL of the proposed update.
If it exceeds $L$, it searches a scalar multiplier in $[0,1]$ for 48 bisection iterations, retaining an evaluated feasible lower endpoint.
It next checks the task KL and, if necessary, applies the same procedure to the remaining displacement.
Both divergences are recomputed for the final proposal.
KL evaluations use double-precision log-probabilities; accepted updates must pass both budget checks within the implementation's relative tolerance of $10^{-4}$.
The final verification matters because the nonlinear KL level sets need not be convex along an arbitrary parameter ray.
The interval search is a feasibility procedure and does not establish a maximal feasible step or a general convergence guarantee.
If the final proposal is nonfinite or fails a budget check, the implementation raises an error before committing it.
It also rejects a current state already outside the task budget beyond its tolerance.
A zero gradient gives a zero CG proposal through the denominator floor; no update is required in that case.
The two-reference KL constraints and exact-inverse tangent projection specify the mathematical construction; the $95\%$ trigger, inward adjustment, finite CG solve, and interval searches specify its numerical realization.

\subsection{First-task optimization and attribution}
\label{app:first-stage-attribution}

At the first task, the response is constant and $\mathbf{F}_k=0$.
For a nonzero gradient and an exact inverse, Equation~\eqref{eq:natural-proposal} reduces to
\begin{equation}
    \mathbf{d}_k=-\frac{\delta}{\sqrt{\lambda}}\frac{\mathbf{g}_k}{\|\mathbf{g}_k\|_2},
    \qquad \|\mathbf{d}_k\|_2=0.02.
    \label{eq:first-stage-update}
\end{equation}
This normalized step is the first-task specialization of the same update rule, before historical response constraints become active.
Mean first-task accuracies are $81.49\%$ for HiRP versus $78.84\%$ for SGD in the four- and six-task settings, and $83.77\%$ versus $79.68\%$ in the ten-task setting; the four- and six-task settings share the same first task.

The final FA/AA comparisons evaluate the complete algorithms, including this optimization difference.
The 32-step intervention in Section~\ref{sec:case-study} starts both methods from the same later HiRP checkpoint and matches parameter-step norms, isolating direction effects at those selected states.
The response/update factorial ablation holds the data, initialization, support rules, and numerical budgets fixed across its variants.
Together these analyses distinguish full-system accuracy, direction effects at common states, and conditional response/solver effects.

\paragraph{Full-sequence controls from a common first-task checkpoint.}
\label{app:common-start-full-sequence}
In the ten-task setting, all methods start task 2 from the same HiRP task-1 checkpoint, with task-1 accuracy $83.77\%$, and complete tasks 2--10.
They share frozen graph and text encoders, the trainable readout, training batches, three epochs per task, batch size 16, and label smoothing $0.2$.
The Fisher-proposal control retains the hierarchical response, feature support, damped Fisher metric, and normalized proposal rule, while removing both KL feasibility checks, radial shrinking, and tangent correction.
SGD uses its standard learning rate of $5\times10^{-4}$.

\begin{table}[!htbp]
    \centering
    \caption{Full-sequence comparisons in the ten-task setting from a shared task-1 checkpoint. Zero-shot mean is the unweighted accuracy over History, Computer, and Instagram after task 10. Values are percentages; higher is better.}
    \label{tab:common-start-transfer}
    \begingroup
    \small
    \setlength{\tabcolsep}{8pt}
    \renewcommand{\arraystretch}{1.12}
    \begin{tabular}{@{}lrr@{}}
        \toprule
        Method & AA $\uparrow$ & Zero-shot mean $\uparrow$ \\
        \midrule
        SGD & 50.71 & 56.82 \\
        Fisher proposal without KL constraints & 56.84 & 55.20 \\
        HiRP (ours) & \textbf{69.74} & \textbf{58.27} \\
        \bottomrule
    \end{tabular}
    \endgroup
\end{table}

HiRP reaches $69.74\%$ AA versus $56.84\%$ for the Fisher-proposal control, a $12.91$-point gain (Table~\ref{tab:common-start-transfer}).
Restricting the final average to tasks 2--10 yields $69.11\%$ versus $58.04\%$.
For SGD, replacing its own task-1 checkpoint with HiRP's increases final AA from $49.83\%$ to $50.71\%$, only $0.88$ points.
These results support an additional benefit from response constraints and their update corrections beyond the first-task starting point and the Fisher proposal alone.

\paragraph{Continual accuracy and zero-shot transfer with matched frozen encoders.}
We evaluate the same final checkpoints on History, Computer, and Instagram using each domain's complete class set, without target-domain adaptation.
These domains contribute neither continual-training examples nor feature support and are not used for model selection; the reported zero-shot mean weights the three domains equally.
Even with the same frozen encoders, trainable readout, and task-1 starting point, SGD is lower than HiRP in both AA ($50.71\%$ versus $69.74\%$) and mean zero-shot accuracy ($56.82\%$ versus $58.27\%$).
Thus, in this matched comparison, HiRP achieves a better balance between learning arrived classes and retaining transfer to held-out graph domains.

\paragraph{Common-start, norm-matched intervention.}
We intervene at task 3 of the four-task setting and task 5 of the six- and ten-task settings.
For each run, HiRP and SGD start from the same HiRP checkpoint and receive the same 32 current-training batches.
Each SGD update is rescaled to match the corresponding HiRP update's Frobenius norm.
For fixed candidate sets, define
\begin{equation}
    m_t(x;\mathbf{W})=\max_{o\in\mathcal O_t}s_o(x;\mathbf{W})
    -\max_{n\in\mathcal N_t}s_n(x;\mathbf{W}).
    \label{eq:old-new-margin}
\end{equation}
The crossing rate is the fraction of all historical test samples with $m_t>0$ before and $m_t<0$ after the intervention.
It measures a change in the winning group, not necessarily a correct-to-incorrect prediction.
HiRP reduces this rate in all three streams; in the ten-task setting it falls from $2.627\%$ to $0.254\%$, while current-class global accuracy changes from $91.86\%$ to $91.71\%$.
Historical-sample global accuracy improves by $0.56$, $0.45$, and $2.94$ points in the four-, six-, and ten-task settings, respectively; current-sample accuracy decreases by $0.05$, $0.18$, and $0.15$ points.
Matching update norms rules out smaller per-step parameter displacement as the sole explanation at these common starting states.

\paragraph{Case-study diagnostics.}
\label{app:case-study}
Figure~\ref{fig:case-preservation}a,b uses all 1180 historical test nodes at task 5 of the ten-task setting and the same 32 batches from 129 current training nodes.
The mass-only control shares the starting readout, support, and numerical KL budgets with HiRP; SGD* matches HiRP's step norm at each iteration.
Mean old-only accuracy changes by $-2.60$, $-0.62$, and $+0.11$ points for SGD*, Mass, and HiRP, respectively.
Their old-to-new crossing rates are $2.63\%$, $0.54\%$, and $0.25\%$; new-class accuracies are $91.86\%$, $91.76\%$, and $91.71\%$.
Tangent correction is inactive during these short interventions, so panels a,b examine the response and its induced update rather than activation of the tangent correction.

Additional single-step and complete-task diagnostics use CiteSeer, Task 2 of the four-task setting.
The single-step experiment uses the boundary checkpoint described in Appendix~\ref{app:boundary-single-step}; retained length is the accepted norm divided by the raw proposal norm, and loss decrease is relative to the initial batch loss.
The raw proposal norm is $0.0076947$, radial norm is $1.2910\!\times\!10^{-7}$, and tangent norm is $0.0071312$.
The initial loss $1.78164$ becomes $1.78164$ under radial shrinking and $1.74053$ under tangent correction.
Separately, complete task-2 runs of hierarchical radial and tangent updates yield new-class test accuracies of $64.47\pm0.90\%$ and $66.67\pm0.80\%$, and old-class accuracies of $81.18\pm1.76\%$ and $80.87\pm1.38\%$.
The new-class accuracy comparison uses complete-task training.

\subsection{What the four-cell ablation controls}
\label{app:ablation-controls}

Table~\ref{tab:ablation} is a $2\times2$ factorial comparison.
The response factor selects aggregate group competition alone or additionally preserves historical conditional discrimination; the solver factor selects scalar shrinking or tangent correction followed by feasibility checks.
Within each paired comparison, all four variants share the pretrained readout, frozen feature cache, class order, current-data batches, support-selection rule, local/task radii, damping, and training epochs.
Their first-task readout tensors are identical across variants in every paired run.
Paired protocol and first-task equality records are supplied in \texttt{reproducibility/ablation\_controls.json}.

Holding the solver fixed compares the two response designs; holding the response fixed compares the two solvers.
For example, with tangent updates fixed, the hierarchical response improves FA over the binary response by $1.59$, $1.22$, and $13.12$ points on whole4, whole6, and global10.
With the hierarchical response fixed, tangent correction improves FA over radial shrinking by $4.57$, $4.21$, and $1.07$ points.
The gains interact: on global10, tangent minus radial is $-2.18$ points with binary mass but $+1.07$ with the hierarchical response.
The interaction makes these conditional contrasts nonadditive.

The response change also changes the Fisher geometry and accepted displacement, so it measures the complete response-induced update under equal numerical budgets.

\section{Additional Experimental Details and Results}
\label{app:experimental-details}

\subsection{Data and protocols}

Table~\ref{tab:data-splits} specifies the supplied splits used for adaptation and evaluation.
Cora denotes the seven-class citation graph with 2,708 nodes; it is not Cora-Full.
Photo denotes the twelve-class text-attributed product graph used in our GraphCLIP pipeline.
Classes from different graphs receive disjoint identifiers.
The full training split is used within each task's class subset, and the train and test sample pools are disjoint.
The configuration names \texttt{whole4}, \texttt{whole6}, and \texttt{global10} denote the 4-dataset/4-task, 6-dataset/6-task, and 4-dataset/10-task settings, respectively. Result files store FA under the key \texttt{Global}.
The exact class-ID maps and per-task cumulative candidates are included in \texttt{reproducibility/protocols/}; \texttt{reproducibility/splits/} contains the four original-domain node-ID partitions.
Global IDs $0$--$6$, $7$--$12$, $13$--$22$, and $23$--$34$ refer respectively to Cora, CiteSeer, WikiCS, and Photo; whole6 appends Sports-Fitness ($35$--$47$) and Actor ($48$--$52$).
The ten tasks of global10 contain $4,3,3,3,4,3,3,4,4,4$ classes in that order; the supplied TSV maps these ranges to the original class identities.

\begin{table}[t]
\centering
\caption{Exact dataset instances and prescribed splits. The single-graph schedule lists the classes introduced at each task; the two external graphs are used in the six-task setting. WikiCS retains its official partially assigned split.}
\label{tab:data-splits}
\small
\setlength{\tabcolsep}{3pt}
\begin{tabular}{lrrrrrl}
\toprule
Dataset & Nodes & Classes & Train & Valid. & Test & Single schedule \\
\midrule
Cora & 2,708 & 7 & 1,624 & 542 & 542 & $2,2,3$ \\
CiteSeer & 3,186 & 6 & 1,911 & 637 & 638 & $2,2,2$ \\
WikiCS & 11,701 & 10 & 580 & 1,769 & 5,847 & $2,2,2,2,2$ \\
Ele-Photo & 48,362 & 12 & 29,017 & 9,672 & 9,673 & $2,2,2,2,2,2$ \\
Sports-Fitness & 173,055 & 13 & 34,611 & 17,305 & 121,139 & -- \\
Actor (HetGB) & 4,416 & 5 & 2,116 & 1,411 & 889 & -- \\
\bottomrule
\end{tabular}
\end{table}

Cora and Photo use fixed NumPy default-generator seed-0 partitions; CiteSeer uses the saved NumPy RandomState seed-0 partition.
WikiCS uses official mask column 0, with 3,505 nodes outside the supervised train/validation/test assignment.
Sports-Fitness uses the supplied aligned \texttt{Fitness.csv}/\texttt{Fitness.pt} graph and a fixed 20/10/70 split generated with PyTorch seed 88.
Actor uses the masks in the supplied HetGB \texttt{Actor.npz} archive.
The source files and original-node split identities for both external graphs are described in \texttt{external\_dataset\_identity.json}; compact cache indices are not substituted for original node identities.

\paragraph{Graph context and label access.}
These are fixed, transductive graph instances with sequential label access.
For the four original domains, the GraphCLIP pipeline uses cached root-node graphs derived from the supplied \texttt{target\_data} edge lists, augmented with 32-dimensional random-walk positional encodings.
For the two external domains, root-node graphs use walks of 256 steps (Sports-Fitness) or 64 steps (Actor), restart probability 0.8, and the same positional-encoding dimension.
Unlabeled neighbors may belong to other tasks or splits: the graph context is not restricted to currently labeled nodes.
The supervised loss uses only current-task training roots, and evaluation uses cumulative candidates for every test root.
Historical training roots enter the bounded support only after their task; validation and test roots are not added as support rows.
This setting therefore does not claim an inductive protocol in which future nodes, edges, or attributes are unavailable.

\paragraph{Frozen model and class semantics.}
The checkpoint is \texttt{pretrained\_graphclip.pt}; its architecture and readout settings are recorded in \texttt{model\_and\_reference.json}.
Its graph encoder has 12 GPS layers, hidden size 1024, and 8 attention heads; 384-dimensional node inputs and 32-dimensional positional encodings yield a 2048-dimensional feature by concatenating mean and center-node representations.
The trainable affine readout is $384\times2049$, including its bias, with 786,816 parameters.
The text encoder/tokenizer is \texttt{sentence-transformers/all-MiniLM-L6-v2}, with attention-mask mean pooling, maximum class-text length 512, and unit-normalized 384-dimensional class embeddings.
We retain the checkpoint's fixed cosine-logit scale $\gamma=14.7611256$.
Encoders run in evaluation mode; the frozen feature cache is shared across methods that adapt the readout.

Class inputs concatenate the dataset template, class name, and supplied description.
The templates are ``this paper has a topic on \{class\}'' (Cora), ``good paper of \{class\}'' (CiteSeer), ``it belongs to \{class\} research area'' (WikiCS), ``this product belongs to \{class\}'' (Photo/Sports-Fitness), and ``this actor belongs to \{class\}'' (Actor).
The complete strings, including whitespace and descriptions, are included in \texttt{class\_texts.json}.
These semantic prototypes are frozen; the active classifier at task $t$ uses only the arrived candidate IDs.
Precomputing the immutable text bank does not expose future-class labels or future learned model parameters to the task's loss.

\paragraph{External reference and feature memory.}
The initial external pool contains 512 root graphs: 256 from arXiv-2023 and 256 from PubMed, selected without replacement using seeds 20260721 and 20260722.
It is the recorded \texttt{graph\_zscl\_reference\_pool\_512\_rwpe32.pt} asset, not the separate five-source balanced pool used by other exploratory runs.
The source datasets are distinct from the six target datasets; this is dataset/ID separation, not a claim of exhaustive semantic deduplication across corpora.
The reference may share source domains with GraphCLIP pretraining.
Its graph IDs, source counts, and feature-memory settings are included in \texttt{model\_and\_reference.json}.
Actual initial support is 512 rows, with the 1024-row cap applied after historical training features are added.
One fully occupied float32 feature support uses 8,392,704 bytes (about 8.00 MiB); the readout uses 3,147,264 bytes (about 3.00 MiB).
Reference responses, anchors, class embeddings, and solver workspaces are additional state and are not included in those two counts.

\subsection{Baseline implementations}
\label{app:baseline-settings}

\paragraph{GraphCLIP joint fine-tuning.}
GraphCLIP starts from the same pretrained checkpoint as HiRP and jointly updates the graph encoder, shared readout, text encoder, and logit scale.
We use the bidirectional graph--text pairwise cross-entropy implementation released with ADAligner~\citep{liu2026adaligner} in its public downstream code.\footnote{\url{https://github.com/karmaisacat-13/ADAligner}, commit \texttt{5e4c7a3}, \texttt{transfer/eval\_prompt.py}.}
Optimization uses AdamW, learning rate $10^{-5}$, weight decay $10^{-5}$, batch size 4, five epochs per task, and gradient clipping at 1.0.
Each task uses all prescribed current-task training examples and continues from the previous task's final model, with a fresh optimizer.
The final epoch supplies the checkpoint; the method uses no LoRA, learned input prompt, replay, or distillation.

\paragraph{Held-out History evaluation.}
Figures~\ref{fig:zero_shot_degradation} and~\ref{fig:zero-shot-retention} use the ten-task stream.
History contributes neither training examples nor feature support and is not used for model selection.
Before continual training ($t=0$) and after each task, evaluation uses the same 8,311 History test nodes and its complete class set without target-domain adaptation.
GraphCLIP recomputes class text embeddings using its updated text encoder, while HiRP uses its frozen text encoder and current readout.

\paragraph{Readout and prototype baselines.}

SGD updates the shared GraphCLIP readout using current-task cross-entropy.
LwF adds distillation of historical-class responses on current inputs, with weight 1 and temperature 2.
Both use a learning rate of $5\times10^{-4}$, three epochs, batch size 16, and label smoothing 0.2.
Neither has a historical supervised replay loss.
HiRP uses the same current-data training budget, replacing the ordinary readout update with its response-constrained rule.
EWC, MAS~\citep{aljundi2018mas}, and OGD~\citep{farajtabar2020ogd} are parameter-regularization or projection baselines on the shared readout.
For whole4 and global10, their saved checkpoints are reevaluated on the common fixed test split; whole6 is evaluated during its full training sequence.
All use the same final metric definitions, while this checkpoint/evaluation provenance is retained in the reproducibility records.
Cosine and TEEN use their own graph representations and prototype rules.

The LoRA comparison adapts GraphCLIP using rank-78 adapters in its final graph block and a shared readout, with 2,064,768 trainable parameters in total.
It uses three epochs per task, batch size 32, learning rate $5\times10^{-4}$, and label smoothing 0.2.
G2LoRA is a separate GraphCLIP adaptation of the published mechanisms, not a claim to reproduce the original paper's complete system or few-shot results.
It uses rank-32 graph and text adapters, four epochs per task, batch size 32, learning rate $10^{-5}$, weight decay 0.05, and unsmoothed cross-entropy.
Both use all prescribed current-class training samples and cumulative test candidates.
Consequently, comparisons share the data protocol but do not equate trainable parameter counts or optimization budgets.

The cross-graph and single-graph comparisons additionally use the corresponding graph/text implementations of GCN~\citep{kipf2017gcn}, Cosine, TEEN~\citep{wang2023teen}, BERT~\citep{devlin2019bert}, and the explicitly identified variants of TPP~\citep{niu2024tpp} and SimpleCIL~\citep{zhou2025simplecil}.
GCN is the LLM4GCL~\citep{cheng2025llm4gcl} \texttt{BareGNN} implementation instantiated as a two-layer GCN with hidden size 128 for the cross-graph runs; it trains on node input features and edges without the pretrained GraphCLIP graph encoder.
BERT sequentially adapts its text model using the upstream LoRA configuration (rank 5, $\alpha=16$, dropout 0.05).
Cosine and TEEN retain their graph representations and prototype rules.
SimpleCIL (BERT) adapts BERT with LoRA on the first task, then keeps its representations fixed and updates class prototypes; this differs from the RoBERTa configuration described in G2LoRA.
Cross-graph prototype updates use all prescribed current-class training samples.

TPP-style denotes the LLM4GCL-derived adapter with graph-smoothed task profiles, per-node predicted routing, and independent linear classification heads on input features.
The executed implementation omits the original TPP graph prompts and GNN classification path; its scores therefore characterize this variant, not a complete reproduction of the original method.
Its task heads remain fixed after their respective tasks, but newly available task prototypes can change the routing of historical nodes.
Inference never uses a ground-truth task identifier or ground-truth task grouping of test inputs.
The adapters use the same node indices, class schedules, and cumulative-candidate evaluation as the readout comparisons; validation-based stopping follows each implementation's configuration.
These comparisons retain differing pretraining resources and model capacities.
In the six-task diagnostic, GCN's final accuracy averaged over the first five tasks is only $0.19\%$. The final task accounts for approximately $0.64\%$ of test nodes, contributing to its low sample-weighted FA.

\paragraph{RoBERTa.}
RoBERTa~\citep{liu2019roberta} denotes sequential full fine-tuning of RoBERTa-large and its standard dense--tanh classification head on node text.
The encoder and head are updated at every task using the complete current training split, without LoRA, prototype replacement, or historical supervised replay.
Training cross-entropy and test prediction both use cumulative seen-class candidates; inference does not receive task identities.
We use AdamW with learning rate $2\times10^{-5}$ and weight decay $0.01$, batch size 32, maximum text length 256, 6\% linear warmup followed by linear decay, and gradient clipping at norm 1.
Each task runs three to six epochs, with patience two; current validation accuracy selects the checkpoint, with validation cross-entropy breaking ties.
Parameters and optimizer state are float32, with float16 autocast and gradient scaling; the full selected checkpoint is restored and checked tensor by tensor before evaluation.
On whole6, Sports-Fitness accounts for $87.32\%$ of test nodes: RoBERTa's $75.12\%$ FA therefore coexists with $27.64\%$ AA, underscoring the need to report both metrics.

\paragraph{InfLoRA.}
InfLoRA$^{\dagger}$~\citep{liang2024inflora} implements the interference-reducing subspace mechanism on the final GraphCLIP GPS attention block, with rank-10 K/V adapters and task-wise linear class heads on 2,048-dimensional graph features.
The preceding graph blocks are frozen; the pretrained semantic readout is not used.
Current-task cross-entropy trains the new class rows and active low-rank factors; historical classifier rows stay fixed and inference considers all seen classes.
Activation subspaces and frozen previous factors carry historical information without supervised replay.
Each task uses all prescribed training roots, 20 epochs, batch size 32, Adam with learning rate $5\times10^{-4}$ and zero weight decay, cosine decay, and validation-best checkpoint selection.
This is a last-block GraphCLIP implementation of the mechanism, rather than the original ViT benchmark configuration; its layer scope and optimization budget differ from HiRP.

\paragraph{SD-LoRA.}
SD-LoRA~\citep{wu2025sdlora} implements magnitude--direction decoupling on the final GraphCLIP GPS attention block, using rank-10 Q/V adapters and task-wise linear heads on 2,048-dimensional graph features.
The preceding graph blocks are frozen and the pretrained semantic readout is not used.
Each task learns its new low-rank factors and current classifier rows, retains previous low-rank directions, and updates their projection-specific magnitudes; historical classifier rows remain fixed.
The implementation preserves the effective adapter function when normalizing directions at task boundaries.
Training cross-entropy uses current-class candidates, while inference compares all seen classes without task identities or historical supervised replay.
All runs use 20 epochs per task, batch size 32, SGD with learning rate $10^{-2}$, momentum $0.9$, zero weight decay, and current-validation-best checkpoint selection.
To prevent overflow on Sports-Fitness, all settings use global gradient clipping at norm 1, float64 gradient-norm accumulation, and finite-value checks; no batches are skipped.
Earlier uncorrected attempts are retained separately and excluded.
The table uses the name SD-LoRA; these results describe its recorded last-block GraphCLIP implementation, not the original vision benchmark configuration.

\paragraph{GCN with LLM embeddings.}
GCN\textsubscript{LLMEmb} follows the LLM4GCL feature-enhancement baseline~\citep{cheng2025llm4gcl}.
Frozen 4-bit Llama-3.2-3B provides 3,072-dimensional node features through attention-mask mean pooling, with maximum text length 512.
A two-layer GCN with hidden size 128 and dropout 0.5 is trained sequentially on these features and graph edges.
Each task uses the full current training split, batch size 256, learning rate $10^{-4}$, weight decay $5\times10^{-4}$, and at most 300 epochs with validation-based checkpoint selection.
Inference considers all arrived classes without task identities or historical supervised replay.

\paragraph{GraphGPT adaptation.}
GraphGPT$^{\dagger}$~\citep{tang2024graphgpt} matches multiple graph tokens to source descriptions, then continually tunes a linear projector with the GraphCLIP encoder and Llama-3.2-3B frozen; it uses no language-model LoRA.
The shared alignment checkpoint is trained for three epochs on 22,532 matching groups, with 1,187 held-out groups, from the five recorded GraphCLIP source corpora.
Downstream runs share this checkpoint and vary the training order.
Each task uses full current training data, batch size 16, AdamW with learning rate $2\times10^{-4}$ and zero weight decay, and validation-based selection after three to six epochs.
Greedy generation uses the cumulative class names; unrecognized answers are counted as incorrect.
The saved protocols record prompt construction and evaluation batches; this is an explicit backbone/source-corpus adaptation, not the official GraphGPT pretrained system.

\paragraph{LLaGA.}
LLaGA$^{\dagger}$~\citep{chen2024llaga} uses the neighborhood-detail (ND) architecture with MiniLM384 node inputs, two-hop sampling with ten neighbors per hop, and the upstream Laplacian positional encoding.
A two-layer $495\!\rightarrow\!3072\!\rightarrow\!3072$ projector maps the graph representation to the frozen Llama-3.2-3B input space; no language-model LoRA or GraphCLIP graph encoder is used.
The projector is randomly initialized and continually trained on all current training roots, using batch size 16, AdamW at $2\times10^{-4}$ with zero weight decay, and validation-based selection over three to six epochs.
Neighbor samples are cached per node within each run, rather than resampled per batch.
Greedy decoding uses cumulative class names with unrecognized answers counted as incorrect; final test evaluation uses batch size 32 throughout, while validation retains its recorded per-run batch of 4 or 32.
The records include checked projector save/restore events and the evaluation-batch policy.
These results describe the ND architecture under this continual protocol and language-model choice, not the official pretrained LLaGA system.

\subsection{Comparison conditions and computation}
\label{app:comparison-contract}

Shared splits, candidate masks, and test identities do not imply identical information access, representations, or optimization cost.
Table~\ref{tab:comparison-conditions} records these distinctions.
Cross-graph LLM4GCL adapters use their upstream default configurations with no per-dataset best-setting override; full effective configurations are in \texttt{upstream\_default\_configs.json}.
GCN/Cosine/TEEN use batches of 256 and at most 300 epochs, learning rate $10^{-4}$ and weight decay $5\times10^{-4}$.
TPP-style trains its independent heads for at most 3000 epochs with batch size 200 and the same rate/decay; the unused original prompt/pretraining options in its YAML do not activate those mechanisms.
BERT and SimpleCIL use \texttt{google-bert/bert-base-uncased}, learning rate $2\times10^{-4}$, weight decay 0.05, maximum text length 256, and at most 20 epochs, with batch sizes 10 and 30 respectively.
The BERT revision and asset names are listed in the reproducibility records.
Their stopping decisions use validation data; BERT trains sequentially, whereas SimpleCIL trains its representation only at the first task.
For GCN, BERT, Cosine, TEEN, and SimpleCIL, saving and restoring the validation-selected checkpoint verifies every trainable tensor; empty checkpoints are rejected.
Cross-graph prototype routines requesting fixed sample counts use all prescribed current-class training rows.
The single-graph GCN, Cosine, TEEN, BERT, and SimpleCIL runs retain their recorded upstream per-dataset configuration overrides and prototype-sampling settings; their complete effective configurations accompany the results.
For the native whole6 readout regularizers, EWC and MAS use weight 100, learning rate $5\times10^{-4}$, three epochs, and batch size 16; their retained low-rank state uses ranks 64 (EWC) and 512 (MAS/OGD).
The recorded EWC recipe uses external-reference statistics, while MAS/OGD also retain task-derived statistics; whole4/global10 retain their archived recipe provenance rather than being attributed to this native training entry point.

\begin{table}[t]
\centering
\caption{What is and is not shared across the compared implementations. All displayed methods use the prescribed current training roots and cumulative-candidate test protocol; history access and adaptation remain method-specific.}
\label{tab:comparison-conditions}
\small
\renewcommand{\arraystretch}{1.15}
\begin{tabular}{@{}p{.18\linewidth}p{.34\linewidth}p{.41\linewidth}@{}}
\toprule
Method family & Updated representation & Historical information used in adaptation \\
\midrule
HiRP / four-cell ablation & Shared readout on frozen GraphCLIP features & Bounded historical and external unlabeled features; response constraints. \\
SGD / LwF & Same readout and frozen feature cache & SGD: current CE only. LwF: previous-model responses on current inputs; no historical-node CE. \\
EWC / MAS / OGD & Shared readout & Stored importance/projection information under each saved recipe; not the HiRP response constraint. \\
LoRA / G2LoRA$^{\dagger}$ & Graph adapters/readout; selected graph and text adapters & Published-mechanism adaptations with their own retained state and optimization budgets. \\
InfLoRA$^{\dagger}$ & Last graph-attention K/V adapters and task-wise linear heads & Activation subspaces and frozen prior factors; no historical supervised replay. \\
SD-LoRA & Last graph-attention Q/V adapters and task-wise linear heads & Frozen prior low-rank directions with learned magnitudes; no historical supervised replay. \\
GCN / BERT & GCN and head / text LoRA and head & Sequential current-task training; distinct from the pretrained GraphCLIP readout. \\
GCN\textsubscript{LLMEmb} & GCN and classifier on frozen LLM node embeddings & Sequential current-task training; no historical supervised replay. \\
RoBERTa & Full RoBERTa-large encoder and classification head & Current-task supervised fine-tuning; no historical supervised replay. \\
Cosine / TEEN / SimpleCIL & Method-specific representations and prototype rules & Retained class prototypes; SimpleCIL fixes BERT after first-task adaptation. \\
TPP-style & Independent input-feature linear heads & Retained heads and graph-derived task profiles; predicted per-node routing. \\
GraphGPT$^{\dagger}$ & Linear projector; frozen GraphCLIP and Llama-3.2-3B & Shared source matching initialization; sequential current-task tuning without historical supervised replay. \\
LLaGA$^{\dagger}$ & ND projector; MiniLM384 inputs and frozen Llama-3.2-3B & Per-seed cached neighbor samples; sequential projector tuning without historical supervised replay. \\
\bottomrule
\end{tabular}
\end{table}

HiRP performs 6,219, 13,110, and 6,228 current-data updates over three epochs on whole4, whole6, and global10.
Each update includes Fisher-vector products, up to 12 CG iterations per solve, an optional second solve, and up to 48 scalar-search iterations per active KL constraint.
Inference uses the same frozen encoder and shared readout as its readout baselines, without the training support or solver.
Table~\ref{tab:recorded-runtime} compares recorded training times with GraphGPT, LLaGA, InfLoRA, and SD-LoRA under their documented configurations.
Baseline times sum the recorded training epochs; HiRP uses task timers that also include support updates and checkpoint saving.
HiRP has lower recorded training time in all three streams, with feature precomputation, validation, and testing excluded.
Training budgets, adapted modules, and device loads differ across methods.

\begin{table}[t]
\centering
\caption{Recorded training time per complete sequence in minutes (mean $\pm$ sample standard deviation). Feature precomputation, validation, and testing are excluded.}
\label{tab:recorded-runtime}
\small
\begin{tabular}{lrrr}
\toprule
Method & 4 datasets / 4 tasks & 6 datasets / 6 tasks & 4 datasets / 10 tasks \\
\midrule
GraphGPT$^{\dagger}$ & 250.9 $\pm$ 0.8 & 742.2 $\pm$ 128.2 & 217.5 $\pm$ 26.4 \\
LLaGA$^{\dagger}$ & 174.5 $\pm$ 59.7 & 523.9 $\pm$ 67.5 & 142.2 $\pm$ 19.6 \\
InfLoRA$^{\dagger}$ & 19.4 $\pm$ 2.0 & 31.8 $\pm$ 3.6 & 27.2 $\pm$ 1.7 \\
SD-LoRA & 21.2 $\pm$ 0.3 & 37.2 $\pm$ 5.0 & 34.2 $\pm$ 2.6 \\
\midrule
HiRP (ours) & 6.7 $\pm$ 1.1 & 16.2 $\pm$ 5.5 & 7.2 $\pm$ 2.5 \\
\bottomrule
\end{tabular}
\end{table}

The recorded cached-feature environment is Python 3.10.20, PyTorch 2.2.2 with CUDA 12.1, PyG 2.7.0, Transformers 4.44.2, and V100 32-GB hardware; its version record is supplied with the reproducibility materials.
All four-cell variants reset current-batch RNG to seed $+100000t$ and support selection to seed $+300000t$.
HiRP uses fixed three-epoch final readouts, while heterogeneous baselines retain their documented stopping and adaptation rules.
The supplied configurations and result records identify the evaluated implementations.

\paragraph{Reproducibility materials.}
The accompanying \texttt{reproducibility/} directory contains protocols, split identities, configurations, five-run main-table results, and derived diagnostic data; \texttt{tools/} provides figure reconstruction scripts.
Its README distinguishes outputs reproducible from these records from training runs requiring the original feature caches, pretrained assets, and complete training/evaluation source.
This is a record and analysis supplement, not a self-contained end-to-end training release.

\paragraph{Scope and limitations.}
The three streams test different domain/class schedules but share several datasets.
Appendix~\ref{app:hyperparameter-sensitivity} examines the local and task radii, damping, and support capacity; sensitivity to other orders and backbones remains to be evaluated.
The benchmark informed method development, and the documented baseline recipes need not be optimal for every method.
Historical reverse-order runs add calibration or group supervision and are outside the evaluated HiRP configuration.
At the method level, aggregate current mass permits within-group changes, bounded support can miss historical regions, and task-entry references can already contain candidate-insertion errors.
These limitations separate empirical response preservation from an unconditional guarantee against forgetting.

\subsection{Single-graph classification}
\label{app:single-graph-results}

\begin{table}[t]
\centering
\caption{Single-graph class-incremental classification. All entries use the same class splits and cumulative-candidate evaluation. FA and AA are percentages; bold denotes the largest mean among the displayed methods. The displayed methods retain their respective backbones.}
\label{tab:single-graph}
\begingroup
\scriptsize
\setlength{\tabcolsep}{2.5pt}
\renewcommand{\arraystretch}{1.2}
\resizebox{\linewidth}{!}{%
\begin{tabular}{@{}lcccccccc@{}}
\toprule
Method & \multicolumn{2}{c}{Cora} & \multicolumn{2}{c}{CiteSeer} & \multicolumn{2}{c}{WikiCS} & \multicolumn{2}{c}{Photo} \\
\cmidrule(lr){2-3} \cmidrule(lr){4-5} \cmidrule(lr){6-7} \cmidrule(lr){8-9}
 & FA $\uparrow$ & AA $\uparrow$ & FA $\uparrow$ & AA $\uparrow$ & FA $\uparrow$ & AA $\uparrow$ & FA $\uparrow$ & AA $\uparrow$ \\
\midrule
GCN & 54.61\,\(\pm\)\,0.18 & 31.42\,\(\pm\)\,0.11 & 32.45\,\(\pm\)\,0.31 & 31.36\,\(\pm\)\,0.30 & 43.76\,\(\pm\)\,3.69 & 36.63\,\(\pm\)\,4.14 & 14.58\,\(\pm\)\,0.04 & 16.39\,\(\pm\)\,0.05 \\
Cosine & 65.87\,\(\pm\)\,1.39 & 62.54\,\(\pm\)\,0.98 & 42.79\,\(\pm\)\,4.27 & 38.87\,\(\pm\)\,4.34 & 58.07\,\(\pm\)\,1.38 & 51.58\,\(\pm\)\,1.85 & 56.34\,\(\pm\)\,1.86 & 60.00\,\(\pm\)\,3.01 \\
TEEN & 59.04\,\(\pm\)\,4.36 & 54.27\,\(\pm\)\,3.70 & 54.49\,\(\pm\)\,2.91 & 48.21\,\(\pm\)\,2.67 & 59.76\,\(\pm\)\,1.51 & 55.31\,\(\pm\)\,0.60 & 49.75\,\(\pm\)\,3.19 & 52.89\,\(\pm\)\,0.99 \\
TPP-style & 68.20\,\(\pm\)\,0.59 & 63.53\,\(\pm\)\,0.47 & 70.64\,\(\pm\)\,0.59 & 69.10\,\(\pm\)\,0.55 & 58.10\,\(\pm\)\,1.10 & 58.61\,\(\pm\)\,1.23 & 50.61\,\(\pm\)\,0.04 & 66.57\,\(\pm\)\,0.09 \\
SGD & 67.65\,\(\pm\)\,0.28 & 56.49\,\(\pm\)\,0.70 & 63.85\,\(\pm\)\,0.50 & 60.82\,\(\pm\)\,0.49 & 62.22\,\(\pm\)\,0.06 & 55.13\,\(\pm\)\,0.06 & 28.68\,\(\pm\)\,0.05 & 28.85\,\(\pm\)\,0.07 \\
LwF & 68.88\,\(\pm\)\,0.11 & 59.93\,\(\pm\)\,0.26 & 64.94\,\(\pm\)\,0.48 & 60.72\,\(\pm\)\,0.52 & 62.46\,\(\pm\)\,0.12 & 54.38\,\(\pm\)\,0.12 & 28.65\,\(\pm\)\,0.31 & 31.53\,\(\pm\)\,0.11 \\
LoRA & 69.07\,\(\pm\)\,0.38 & 59.86\,\(\pm\)\,0.37 & 68.86\,\(\pm\)\,0.24 & 65.97\,\(\pm\)\,0.28 & 66.28\,\(\pm\)\,0.25 & 60.07\,\(\pm\)\,0.22 & 34.04\,\(\pm\)\,0.03 & 37.92\,\(\pm\)\,0.05 \\
\midrule
HiRP (ours) & \textbf{\boldmath 75.09\,\(\pm\)\,0.18} & \textbf{\boldmath 72.57\,\(\pm\)\,0.42} & \textbf{\boldmath 73.88\,\(\pm\)\,1.27} & \textbf{\boldmath 70.71\,\(\pm\)\,1.79} & \textbf{\boldmath 74.22\,\(\pm\)\,0.08} & \textbf{\boldmath 72.18\,\(\pm\)\,0.12} & \textbf{\boldmath 71.23\,\(\pm\)\,0.66} & \textbf{\boldmath 75.45\,\(\pm\)\,1.70} \\
\bottomrule
\end{tabular}%
}
\endgroup
\end{table}

\begin{table}[t]
\centering
\caption{Additional language-model comparisons on CiteSeer and Photo. Entries are means and sample standard deviations, using the same evaluation splits as Table~\ref{tab:single-graph}.}
\label{tab:single-language}
\begingroup
\scriptsize
\setlength{\tabcolsep}{2.5pt}
\renewcommand{\arraystretch}{1.2}
\begin{tabular*}{\linewidth}{@{\extracolsep{\fill}}lcccc@{}}
\toprule
Method & \multicolumn{2}{c}{CiteSeer} & \multicolumn{2}{c}{Photo} \\
\cmidrule(lr){2-3} \cmidrule(lr){4-5}
 & FA $\uparrow$ & AA $\uparrow$ & FA $\uparrow$ & AA $\uparrow$ \\
\midrule
BERT & 31.24\,\(\pm\)\,0.36 & 30.20\,\(\pm\)\,0.35 & 14.29\,\(\pm\)\,0.01 & 16.06\,\(\pm\)\,0.01 \\
SimpleCIL (BERT) & 57.58\,\(\pm\)\,4.65 & 57.75\,\(\pm\)\,4.23 & 31.98\,\(\pm\)\,6.98 & 43.63\,\(\pm\)\,5.02 \\
\midrule
HiRP (ours) & \textbf{\boldmath 73.88\,\(\pm\)\,1.27} & \textbf{\boldmath 70.71\,\(\pm\)\,1.79} & \textbf{\boldmath 71.23\,\(\pm\)\,0.66} & \textbf{\boldmath 75.45\,\(\pm\)\,1.70} \\
\bottomrule
\end{tabular*}
\endgroup
\end{table}

HiRP attains the highest FA and AA among the methods in Table~\ref{tab:single-graph} on all four graphs.
Its FA is $75.09\%$, $73.88\%$, $74.22\%$, and $71.23\%$ on Cora, CiteSeer, WikiCS, and Photo, respectively.
The same configuration is used without substituting dataset-specific exploratory variants.
Table~\ref{tab:single-language} additionally reports BERT and SimpleCIL (BERT) on CiteSeer and Photo using the same evaluation splits.
These results extend the classification comparison to individual graphs.

\subsection{Empirical coverage of the prediction-stability certificate}
\label{app:certificate-coverage}

We evaluate the practical coverage of Theorem~\ref{prop:response-stability} on the four single-graph HiRP settings in Table~\ref{tab:single-graph}, covering 12 sequences and 39 task transitions.
For each transition, we compare saved test logits from the task-entry and task-end readouts on identical historical nodes and the same expanded candidate set $\mathcal O_t\cup\mathcal N_t$.
The reference-correct set $\mathcal I_{t,s}$ contains historical test nodes whose task-entry prediction over this expanded set equals their true label $y$.
Correctness restricted to historical candidates is not sufficient for inclusion.
We compute $\delta_R$ in Eq.~\eqref{eq:response-margin} with $c=y$ and the pointwise $D_H$ from the paired logits, retaining the original score scale without temperature adjustment.
Ground-truth labels are used only for this offline diagnostic.

\paragraph{Coverage and aggregation.}
Writing $N_{\mathrm{ref}}=|\mathcal I_{t,s}|$, the two coverage rates are
\begin{equation}
    C_+=\frac{\sum_{x\in\mathcal I_{t,s}}\mathbf 1\{\delta_R(x)>0\}}{N_{\mathrm{ref}}},
    \qquad
    C_{\mathrm{cert}}=\frac{\sum_{x\in\mathcal I_{t,s}}\mathbf 1\{\delta_R(x)>0,\ D_H(x)<\delta_R(x)^2/2\}}{N_{\mathrm{ref}}}.
\end{equation}
Table~\ref{tab:certificate-coverage} first averages each rate equally over transitions within a run, then reports the mean and sample standard deviation across runs.
Its counts instead pool sample--transition pairs; a node may occur in multiple transitions, so these are not counts of independent nodes.
We evaluate the strict inequalities in float64 using stable log-softmax/logsumexp and natural logarithms.
No reference-correct pair lies within $10^{-12}$ of the zero-margin or certificate boundary; this tolerance is used only for auditing, not to relax either condition.

\begin{table}[t]
\centering
\caption{Single-graph certificate coverage. $N_{\mathrm{ref}}$ counts reference-correct historical sample--transition pairs on expanded candidates; Trans. counts transitions across runs. $C_+$ and $C_{\mathrm{cert}}$ are percentages (transition mean within each run, then mean $\pm$ sample SD). The last column reports pooled correct-to-incorrect changes among certified pairs.}
\label{tab:certificate-coverage}
\begingroup
\small
\setlength{\tabcolsep}{3pt}
\renewcommand{\arraystretch}{1.16}
\begin{tabular*}{\linewidth}{@{\extracolsep{\fill}}lr r c c r@{}}
\toprule
Dataset & Trans. & $N_{\mathrm{ref}}$ & $C_+$ (\%) & $C_{\mathrm{cert}}$ (\%) & Forgotten / certified \\
\midrule
Cora & 6 & 724 & 97.63 $\pm$ 0.79 & 88.89 $\pm$ 2.54 & 0 / 642 \\
CiteSeer & 6 & 1,107 & 98.28 $\pm$ 1.80 & 91.82 $\pm$ 1.91 & 0 / 1,032 \\
WikiCS & 12 & 21,565 & 98.21 $\pm$ 0.16 & 96.42 $\pm$ 0.24 & 0 / 20,717 \\
Photo & 15 & 44,367 & 96.41 $\pm$ 0.18 & 80.86 $\pm$ 0.48 & 0 / 35,906 \\
\bottomrule
\end{tabular*}
\endgroup
\end{table}

\paragraph{Observed retention.}
The complete certificate covers $80.86\%$--$96.42\%$ under the stated averaging rule.
Across all datasets, 58,297 of 67,763 reference-correct sample--transition pairs satisfy both conditions, and none changes from correct to incorrect.
Another 5,856 pairs remain correct without certification; all 3,610 observed correct-to-incorrect changes occur among uncertified pairs.
Figure~\ref{fig:certificate-coverage} plots all 66,396 reference-correct pairs with positive response margins; the remaining 1,367 pairs are excluded from the plot but remain in the coverage denominators.
Every plotted point strictly below the diagonal retains its correct prediction, consistent with Theorem~\ref{prop:response-stability}; retained points above the diagonal illustrate that the condition is sufficient rather than necessary.

\begin{figure}[t]
    \centering
    \includegraphics[width=\linewidth]{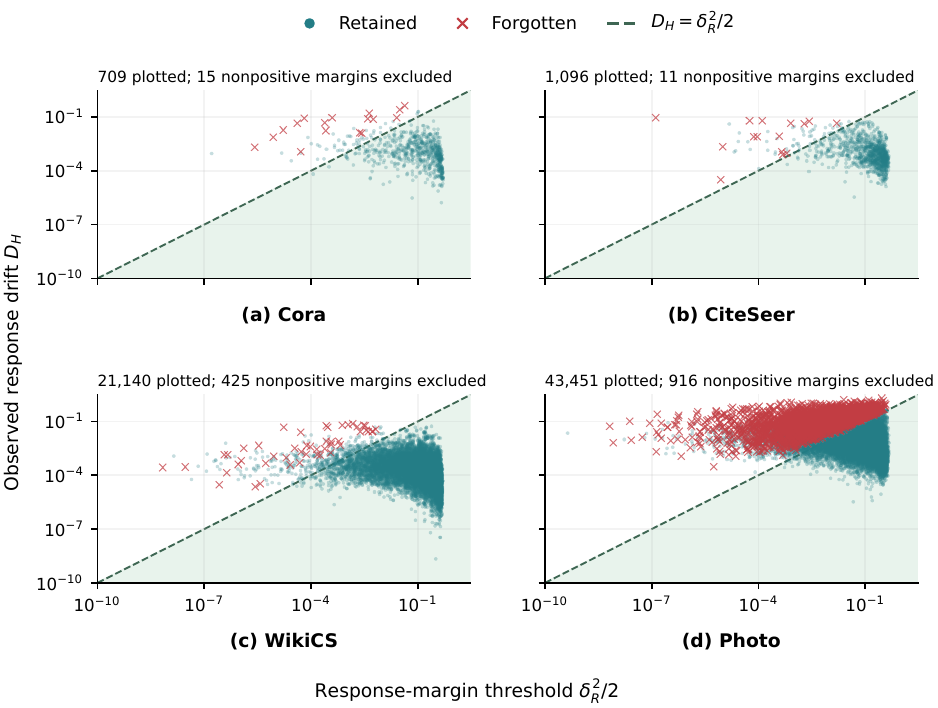}
    \caption{Single-graph coverage of Theorem~\ref{prop:response-stability}. Each point is a reference-correct historical sample--transition pair with $\delta_R>0$. Circles indicate retained predictions and crosses indicate correct-to-incorrect changes. The dashed line is $D_H=\delta_R^2/2$; the shaded region below it satisfies the strict certificate and contains no forgotten pairs. Panel annotations give the number of nonpositive-margin pairs excluded from the plot. Both axes use logarithmic scales.}
    \label{fig:certificate-coverage}
\end{figure}

\paragraph{Scope.}
Coverage is measured on the four single-graph settings using observed test-point margins and task-entry-to-task-end drift after training.
It assesses a sufficient condition for individual prediction stability, distinct from full-sequence signed AF and causal attribution of algorithmic gains.
The analysis is limited to these settings; an average training-support budget alone does not certify test points.
Per-transition counts, derived point data, and plotting code accompany the reproducibility supplement.

\subsection{Final accuracy and baseline retention}
\label{app:baseline-retention}

Table~\ref{tab:baseline-retention} separates final historical-task accuracy $R$ from acquisition accuracy $Q$, with $\mathrm{AF}=R-Q$.
The smaller forgetting of TPP-style does not arise solely from a lower acquisition reference: on whole4 it also has higher final historical accuracy than HiRP ($68.28\%$ versus $66.48\%$).
HiRP instead achieves higher last-task accuracy ($85.05\%$ versus $72.22\%$), yielding higher overall FA and AA.
On whole6 and global10, HiRP has both higher acquisition and higher final historical accuracy, alongside a larger decline from acquisition.
These results characterize a learning--retention trade-off while retaining final classification performance as the primary objective.

\begin{table}[t]
\centering
\caption{Historical-task retention for baseline variants and the hierarchical-response ablation. $R$ is final historical-task accuracy and $Q$ is accuracy when each historical task was first learned; the final task is excluded. Entries report means $\pm$ sample standard deviations in percentages or percentage points, and $\mathrm{AF}=R-Q$ before rounding. Bold marks the largest displayed mean.}
\label{tab:baseline-retention}
\label{tab:retention-decomposition}
\begingroup
\fontsize{8}{9.5}\selectfont
\setlength{\tabcolsep}{0.6pt}
\renewcommand{\arraystretch}{1.2}
\resizebox{\linewidth}{!}{%
\begin{tabular}{@{}lccccccccc@{}}
\toprule
Method & \multicolumn{3}{c}{4 datasets / 4 tasks} & \multicolumn{3}{c}{6 datasets / 6 tasks} & \multicolumn{3}{c}{4 datasets / 10 tasks} \\
\cmidrule(lr){2-4} \cmidrule(lr){5-7} \cmidrule(lr){8-10}
 & $R$ $\uparrow$ & $Q$ $\uparrow$ & AF $\uparrow$ & $R$ $\uparrow$ & $Q$ $\uparrow$ & AF $\uparrow$ & $R$ $\uparrow$ & $Q$ $\uparrow$ & AF $\uparrow$ \\
\midrule
TPP-style & \textbf{\boldmath 68.28\(\pm\)0.34} & 70.57\(\pm\)0.34 & \(-\)2.28\(\pm\)0.04 & 70.79\(\pm\)0.06 & 72.19\(\pm\)0.05 & \(-\)1.40\(\pm\)0.02 & 59.60\(\pm\)0.19 & 69.58\(\pm\)0.24 & \(-\)9.98\(\pm\)0.09 \\
SimpleCIL & 59.83\(\pm\)1.15 & 61.59\(\pm\)1.19 & \textbf{\boldmath \(-\)1.76\(\pm\)0.05} & 48.57\(\pm\)0.73 & 49.90\(\pm\)0.79 & \textbf{\boldmath \(-\)1.33\(\pm\)0.07} & 50.75\(\pm\)0.95 & 58.51\(\pm\)0.87 & \textbf{\boldmath \(-\)7.77\(\pm\)0.09} \\
Hier. + radial & 66.38\(\pm\)0.69 & 74.17\(\pm\)0.63 & \(-\)7.79\(\pm\)0.41 & 71.53\(\pm\)0.33 & 76.70\(\pm\)0.26 & \(-\)5.17\(\pm\)0.17 & \textbf{\boldmath 67.65\(\pm\)0.16} & 80.42\(\pm\)0.12 & \(-\)12.76\(\pm\)0.27 \\
\midrule
HiRP (ours) & 66.48\(\pm\)0.37 & \textbf{\boldmath 74.91\(\pm\)0.57} & \(-\)8.43\(\pm\)0.32 & \textbf{\boldmath 73.45\(\pm\)0.13} & \textbf{\boldmath 79.66\(\pm\)0.25} & \(-\)6.21\(\pm\)0.32 & 67.26\(\pm\)0.09 & \textbf{\boldmath 82.00\(\pm\)0.11} & \(-\)14.74\(\pm\)0.15 \\
\bottomrule
\end{tabular}%
}
\endgroup
\end{table}

Saved per-node predictions show that all historical correct-to-incorrect changes in the TPP-style runs coincide with changed routing; predictions remain unchanged when the selected head is unchanged.
Its mean final task-routing accuracies are $98.94\%$, $96.46\%$, and $63.54\%$ on whole4, whole6, and global10.
These diagnostics characterize the routing-based TPP-style variant defined in Appendix~\ref{app:baseline-settings}.

\subsection{Retention and acquisition in the ablation}
\label{app:retention-decomposition}

Let $R=(T-1)^{-1}\sum_{i<T}A_{T,i}$ be final historical-task accuracy and $Q=(T-1)^{-1}\sum_{i<T}A_{i,i}$ the corresponding acquisition accuracy.
For paired configurations on the same tasks, $\Delta\mathrm{AF}=\Delta R-\Delta Q$.
Table~\ref{tab:retention-decomposition} reports both configurations; tangent-minus-radial differences are paired comparisons with the hierarchical response held fixed.
Whole4 and whole6 improve final historical accuracy but increase acquisition accuracy by more, yielding lower AF.
Global10 additionally incurs a small final-retention loss.

The mean KL budget controls the empirical support average, so changes concentrated on poorly covered historical regions can remain consequential for accuracy.

\clearpage
\subsection{Functional preservation with identical feature support}
\label{app:memory-distillation}

Table~\ref{tab:memory-distillation} compares HiRP with two distillation controls using the same historical and external feature support, alongside SGD and LwF.

\begin{table}[ht]
\centering
\caption{Additional distillation controls with identical feature support. Means $\pm$ sample standard deviations; FA/AA are accuracy (\%) and AF is signed forgetting (points). Bold marks the highest mean.}
\label{tab:memory-distillation}
\begingroup
\fontsize{8}{9.5}\selectfont
\setlength{\tabcolsep}{0.6pt}
\renewcommand{\arraystretch}{1.2}
\resizebox{\linewidth}{!}{%
\begin{tabular}{@{}lccccccccc@{}}
\toprule
Method & \multicolumn{3}{c}{4 datasets / 4 tasks} & \multicolumn{3}{c}{6 datasets / 6 tasks} & \multicolumn{3}{c}{4 datasets / 10 tasks} \\
\cmidrule(lr){2-4} \cmidrule(lr){5-7} \cmidrule(lr){8-10}
 & FA $\uparrow$ & AA $\uparrow$ & AF $\uparrow$ & FA $\uparrow$ & AA $\uparrow$ & AF $\uparrow$ & FA $\uparrow$ & AA $\uparrow$ & AF $\uparrow$ \\
\midrule
SGD & 75.79\(\pm\)0.18 & 58.35\(\pm\)0.22 & \(-\)25.67\(\pm\)0.08 & 83.02\(\pm\)0.14 & 58.77\(\pm\)0.28 & \(-\)22.70\(\pm\)0.10 & 50.16\(\pm\)0.15 & 49.96\(\pm\)0.15 & \(-\)41.67\(\pm\)0.18 \\
LwF & 76.63\(\pm\)0.18 & 58.43\(\pm\)0.61 & \(-\)25.24\(\pm\)0.69 & 84.14\(\pm\)0.07 & 61.55\(\pm\)0.26 & \(-\)19.06\(\pm\)0.18 & 56.31\(\pm\)0.33 & 49.64\(\pm\)0.30 & \(-\)41.23\(\pm\)0.27 \\
\midrule
Memory-LwF & 77.13\(\pm\)0.17 & 59.37\(\pm\)0.63 & \(-\)24.14\(\pm\)0.65 & 84.99\(\pm\)0.08 & 63.63\(\pm\)0.48 & \(-\)16.75\(\pm\)0.36 & 54.47\(\pm\)0.11 & 47.39\(\pm\)0.39 & \(-\)44.13\(\pm\)0.38 \\
Memory-KD & 78.18\(\pm\)0.13 & 68.14\(\pm\)0.22 & \(-\)11.19\(\pm\)0.27 & 85.46\(\pm\)0.12 & 70.58\(\pm\)0.17 & \(-\)7.63\(\pm\)0.19 & 60.71\(\pm\)0.12 & 58.65\(\pm\)0.42 & \(-\)29.97\(\pm\)0.45 \\
\midrule
HiRP (ours) & \textbf{\boldmath 80.09\(\pm\)0.19} & \textbf{\boldmath 71.12\(\pm\)0.26} & \textbf{\boldmath \(-\)8.43\(\pm\)0.32} & \textbf{\boldmath 87.04\(\pm\)0.42} & \textbf{\boldmath 73.22\(\pm\)0.23} & \textbf{\boldmath \(-\)6.21\(\pm\)0.32} & \textbf{\boldmath 75.43\(\pm\)0.05} & \textbf{\boldmath 69.86\(\pm\)0.13} & \textbf{\boldmath \(-\)14.74\(\pm\)0.15} \\
\bottomrule
\end{tabular}%
}
\endgroup
\end{table}

\paragraph{Setup.}
The teacher is each control's own task-entry readout $\boldsymbol{\theta}_0$, fixed within the task.
Memory-LwF distills softmax probabilities over historical classes $\mathcal O_t$; Memory-KD uses all arrived classes $\mathcal C_{\leq t}$.
Writing the temperature-$\tau$ distribution as $\boldsymbol{\pi}_{t,\tau}$, both minimize
\begin{equation}
    \mathcal L_{\mathrm{CE}}(\mathcal B_t;\boldsymbol{\theta})
    +\beta\tau^2\widehat{\mathbb E}_{x\sim\mathcal M_t}
    D_{\mathrm{KL}}\!\left(\boldsymbol{\pi}_{t,\tau}(x;\boldsymbol{\theta}_0)\,\|\,\boldsymbol{\pi}_{t,\tau}(x;\boldsymbol{\theta})\right),
    \label{eq:memory-distillation}
\end{equation}
where $\widehat{\mathbb E}$ averages all retained support rows with the source weights in Equation~\eqref{eq:empirical-functional-divergence}.
The distillation term starts at task two.
Both controls share HiRP's task-wise support (at most 1024 unlabeled rows), pretrained initialization, frozen features and class semantics, current training splits and batch order, three epochs, batch size 16, and label smoothing $0.2$.
They use ordinary SGD at $5\times10^{-4}$ with zero momentum and weight decay, temperature $\tau=2$, and weight $\beta=1$.
Final task readouts are evaluated with the same cumulative candidates and task-agnostic inference.
The controls run on a T4 with PyTorch 2.5; detailed configurations accompany the supplemental records.

\paragraph{Results.}
Memory-KD improves over Memory-LwF and current-input LwF on all three streams.
HiRP achieves higher FA, AA, and signed AF than both controls; its AA gains over Memory-KD are $2.98$, $2.65$, and $11.22$ points on whole4, whole6, and global10, respectively.
These additional comparisons support HiRP's effectiveness with historical feature access held fixed.

\clearpage
\subsection{Hyperparameter sensitivity analysis}
\label{app:hyperparameter-sensitivity}

We vary the local radius $\delta\in\{0.001,0.002,0.004,0.008\}$, task radius $\rho\in\{0.05,0.1,0.2,0.4\}$, damping $\lambda\in\{0.003,0.01,0.03,0.1\}$, and support capacity $M\in\{768,1024,1536,2048\}$ on whole4, whole6, and global10.
Each run changes one parameter while keeping the others at their defaults: $\delta=0.002$, $\rho=0.1$, $\lambda=0.01$, and $M=1024$.
The study uses complete streams and the same three-epoch training protocol; the external-reference cap remains 512 rows throughout the capacity sweep.

\begin{figure}[htbp]
    \centering
    \includegraphics[width=\linewidth]{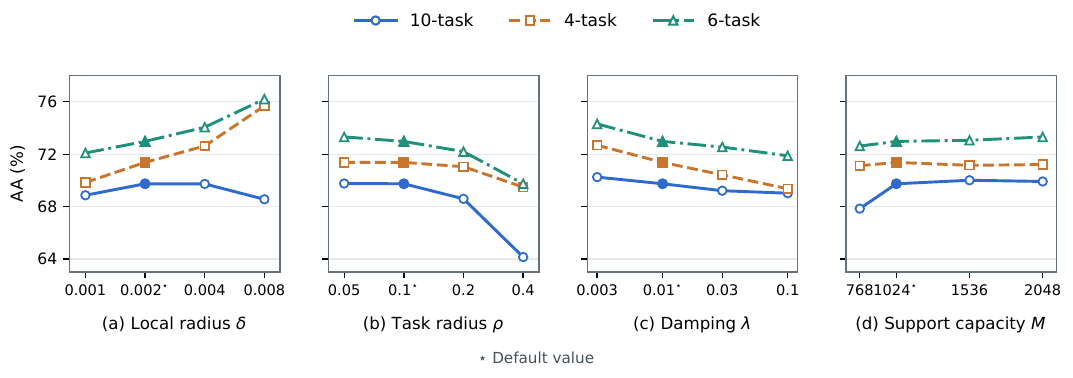}
    \caption{Sensitivity to four core parameters on three continual-learning streams. Curves show final AA; stars mark default values. The first three horizontal axes use logarithmic scales, and support capacity uses a linear scale.}
    \label{fig:core-parameter-sensitivity}
\end{figure}

Figure~\ref{fig:core-parameter-sensitivity} shows that larger local steps improve AA on whole4 and whole6, whereas global10 peaks near the default local radius and declines at $\delta=0.008$.
Increasing the task radius from $0.05$ to $0.4$ reduces AA on all three streams, most strongly on global10, from $69.76\%$ to $64.15\%$.
Lower damping gives higher AA over the tested range.
Increasing capacity from 768 to 1024 improves global10 AA from $67.85\%$ to $69.74\%$, with limited further gains; whole4 changes little and whole6 improves modestly.
These trends show that local step size, cumulative drift, and support coverage affect the streams differently.

\end{document}